\documentclass[accepted]{uai2026} 
\usepackage[american]{babel}

\usepackage[numbers]{natbib} 
\usepackage{mathtools} 
\usepackage{booktabs} 
\usepackage{tikz} 

\usepackage{hyperref}
\usepackage{url}
\usepackage{amssymb}
\usepackage{mathrsfs}
\usepackage{amsmath,amsfonts, amsthm, mathtools}
\usepackage{thmtools}
\usepackage{bm}
\usepackage{calc}
\usepackage[nameinlink,noabbrev,capitalize]{cleveref}
\usepackage{graphicx}
\usepackage{xparse}
\usepackage{pgffor}
\usepackage[ruled,vlined,linesnumbered]{algorithm2e}
\usepackage{algpseudocode}
\usepackage{textcomp}
\usepackage{booktabs}
\usepackage{pgfplots}
\usepackage{xcolor}
\usepackage{array}
\usepackage{enumitem}
\usepackage{subfiles}
\usepackage{pdflscape}
\usepackage{subcaption}
\usepackage{blkarray} 
\usepackage{nicematrix}
\usepackage{tikz}
\usetikzlibrary{decorations.pathreplacing}
\usetikzlibrary{calc, arrows.meta}
\usetikzlibrary{arrows.meta,patterns,backgrounds} 
\usepackage{tikz-3dplot}
\usepackage[absolute,overlay]{textpos}  
\usepackage{wrapfig}
\usepackage[mode=buildnew]{standalone}
\usepackage{longtable}   
\usepackage[most]{tcolorbox}
\usepackage[normalem]{ulem}
\usepackage{ifthen}
\pgfplotsset{compat=1.18}
\usepackage{float}

\newtheorem{theorem}{Theorem}[section]       
\newtheorem{lemma}[theorem]{Lemma}           
   
\newtheorem{definition}[theorem]{Definition}

\newtheorem{assum}[theorem]{Assumption}

\newcommand{\ru}{\mathrm{u}}
\newcommand{\re}{\mathrm{e}}
\newcommand{\rx}{\mathrm{x}}
\newcommand{\ry}{\mathrm{y}}

\newcommand{\tA}{\tilde{A}}
\newcommand{\tB}{\tilde{B}}
\newcommand{\abtildenobracket}{\tA, \tB}
\newcommand{\exptuple}{\rx_0, \ru}

\newcommand{\calK}{\mathcal{K}}

\definecolor{tumblue}{HTML}{0065BD}

\newcommand{\brn}[1]{\mathbb{R}^{#1}}

\newcommand{\calU}{\mathcal{U}}

\newcommand{\abpar}{(A, B)}
\newcommand{\abtilde}{(\tilde{A}, \tilde{B})}

\newcommand{\sphalf}{\hspace{0.05cm}}
\newcommand{\spone}{\hspace{0.1cm}}

\newcommand{\spbar}{\sphalf | \sphalf}

\newcommand{\bigbra}[1]{\left(#1\right)}

\newcommand{\rank}{\operatorname{rank}}
\renewcommand{\dim}{\operatorname{dim}}
\newcommand{\traj}[2]{\varphi\bigbra{#1\spbar #2}}

\newcommand{\inv}[1]{#1^{-1}}

\newcommand{\rightbarwrap}[2]{\left. #1 \right|_{#2}}

\newlength{\AcolOne}\newlength{\AcolTwo}
\newlength{\ArowTop}\newlength{\ArowBot}
\newlength{\tmpH}\newlength{\tmpD}

\newlength{\Bcol}\newlength{\BrowTop}\newlength{\BrowBot}
\newlength{\BtmpH}\newlength{\BtmpD}

\definecolor{violet}{HTML}{9527F5}

\newcommand{\Kcore}{[\rx_0 \; B]}
\newcommand{\vissub}{V(\rx_0)}
\newcommand{\calE}{\mathcal{E}}
\newcommand{\frob}[1]{\| #1 \|_F}
\newcommand{\gram}{\mathcal{G}_{[0,T]}}

\title{Limits of Learning Linear Dynamics from Experiments}

\author[1,2]{\href{mailto:<aybuke.ulusarslan@tum.de>}{Aybüke Ulusarslan}\thanks{Correspondence to: \href{mailto:aybuke.ulusarslan@tum.de}{aybuke.ulusarslan@tum.de}}}
\author[1,2,3]{\href{mailto:<niki.kilbertus@helmholtz-munich.de>}{Niki Kilbertus}{}} 
\author[1,2,3]{\href{mailto:<nora.schneider@helmholtz-munich.de>}{Nora Schneider}{}} 

\affil[1]{%
    Technical University of Munich 
}
\affil[2]{%
    Helmholtz Munich 
}
\affil[3]{%
    Munich Center for Machine Learning (MCML)
}

\begin{document}
\maketitle
\pagestyle{numbered}

\begin{abstract}
Learning governing dynamics from data is a common goal across the sciences, yet it is only well-posed when the underlying mechanisms are identifiable. In practice, many data-driven methods implicitly assume identifiability; when this assumption fails, estimated models can yield spurious predictions and invalid mechanistic conclusions. Classical identifiability guarantees for controlled linear time-invariant (LTI) systems provide sufficient conditions-- controllability and persistent excitation-- but leave open whether identifiability holds when these conditions fail, and which parts of the system remain identifiable without full identifiability.
We show that the experimental setup, i.e., the realized initial state and control input, dictates a fundamental limit on the information recoverable from the observed trajectory. We develop a geometric characterization of this limit and derive a closed-form description of all systems consistent with the experimental setup. Crucially, we prove that even when the full system is not identifiable, the restricted dynamics on the subspace reachable by the experiment remain uniquely determined.
\end{abstract}

\section{Introduction}\label{sec:intro}

Recovering the dynamics of a system from data is a central goal across sciences and engineering \citep{perakis2023covid, somacal2022uncovering, nelson1998flight}.
Ordinary differential equations (ODEs) are widely used to model such dynamical systems.
Recent work proposes to learn governing ODEs directly from time-series data, ranging from neural ODEs \citep{chen2018neural} to symbolic regression approaches \citep{brunton2016discovering}.
When learning dynamics from time-series data, a fundamental prerequisite is that the underlying dynamics are identifiable \citep{BELLMAN1970329}. When identifiability fails, multiple systems can generate identical trajectories. In this case, an identification algorithm will return one of these systems that could have generated the data, often with no indication that it is not unique. As a consequence, the estimated system may extrapolate incorrectly to new initial conditions and control inputs, or lead to spurious mechanistic interpretations of the underlying dynamics (see \Cref{fig:non-identifiability}).

Classical identifiability results for LTI systems \citep{kalman1963mathematical,willems2005note,BELLMAN1970329} establish that controllability of the system and a persistently exciting (PE) input are sufficient to uniquely recover the underlying dynamics from a single experiment. As a result, controllability is frequently treated as a default assumption. Yet, in many practical settings---especially sparse and high-dimensional systems---controllability fails. In these regimes, it remains unclear whether an experiment can still uniquely determine the full system or parts of the system. For \textit{autonomous} systems, i.e., systems without a control input, \citet{stanhope2014identifiability} and \citet{qiu2022identifiability} show that identifiability from a single trajectory reduces to whether the orbit is confined to a proper subspace that is closed under the action of the dynamics. Extending this picture to \textit{controlled} systems requires accommodating control aspects and disentangling the roles of the initial state and the input. To our knowledge, no closed-form characterization of identifiability exists for non-autonomous LTI systems.

We bridge this gap by deriving an experiment-conditional identifiability characterization for controlled LTI systems that does not assume controllability. We derive the \emph{visible subspace}, the part of the state space that the experiment can reach. We show that systems consistent with an experiment must induce the same dynamics on the visible subspace, while the dynamics can differ arbitrarily outside of it.
This leads to our main contributions: (i)~a partial identifiability theorem characterizing the subsystem uniquely determined by a given experiment,
(ii)~an explicit, closed-form parametrization of all systems consistent with the experiment, and
(iii)~ algebraic criteria that quantify proximity to non-identifiability. Finally, we validate these results empirically, demonstrating their relevance for sparse systems.

\begin{figure}[ht]
    \centering
    \includegraphics[width=0.45\textwidth]{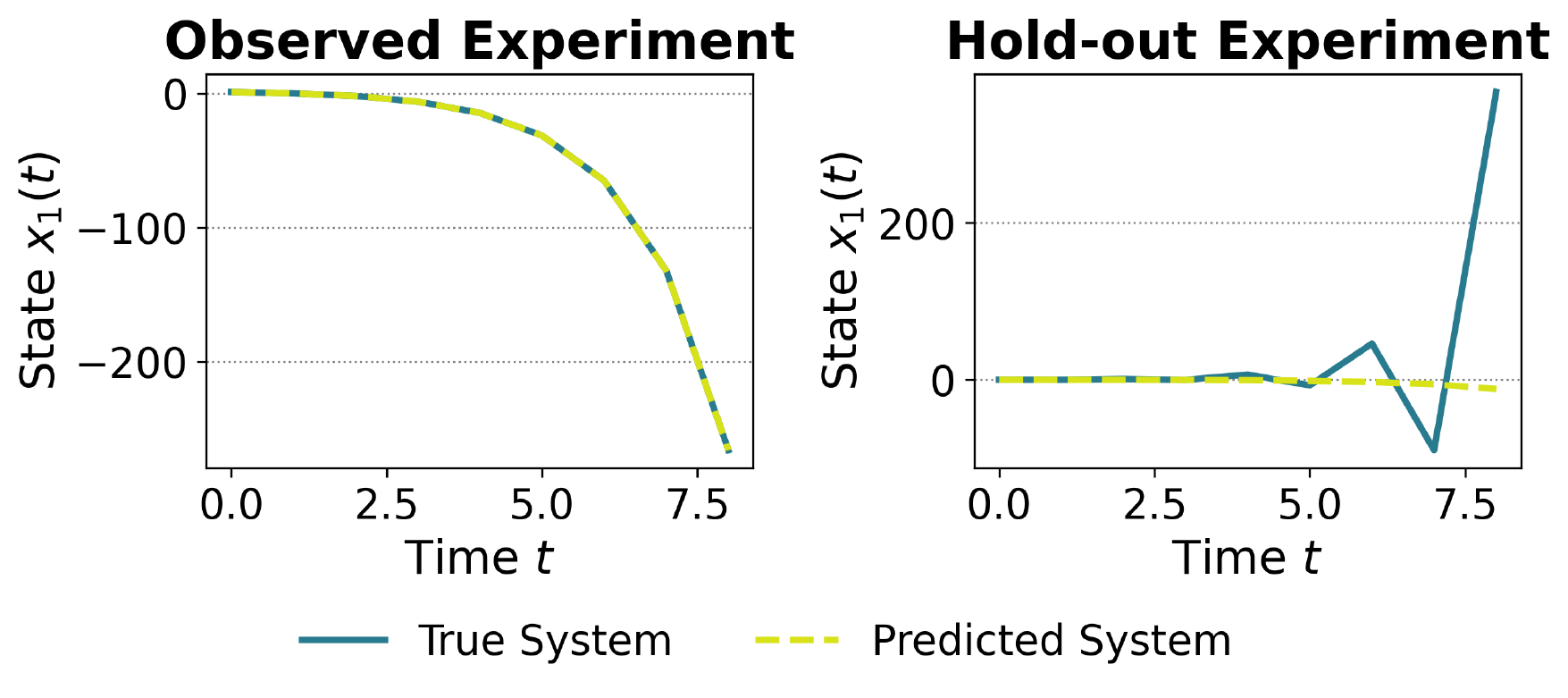}
    \caption{\textbf{Non-identifiability from a single realized experiment.} Two distinct LTI-systems are indistinguishable from the observed experiment (left), but they yield different trajectories under a new experimental condition (right).}
    \label{fig:non-identifiability}
\end{figure}

\section{Problem Statement}\label{sec:preliminaries}

\textbf{Problem Setting.} We consider the inverse problem of identifying the parameters of an unknown, continuous-time linear time-invariant (LTI) system from observed state-input trajectories. The system dynamics are governed by a linear ordinary differential equation (ODE)
\begin{align}
\label{eq:LTI}
\dot{\rx}(t) = A\rx(t) + B\ru(t), \;\; 
\rx(0) = \rx_0 \in \mathbb{R}^{n},
\end{align}
where $\rx(t) \in \brn{n}$ is the state, $\ru(t) \in \brn{m}$ is the control input, and the matrices $(A, B) \in \mathbb{R}^{n \times n} \times \mathbb{R}^{n \times m}$ are the unknown system parameters to be recovered \footnote{We assume exact full-state measurements; however, our theoretical analysis extends to systems with known full-rank linear observation maps $C$ via the exact reconstruction $\rx(t)=C^\dagger\ry(t)$. However, in practice, ill-conditioned $C$ may introduce numerical challenges under noisy observations.}.

We distinguish between an \emph{experiment class} $\calE=(\rx_0, \calU)$, defined by a fixed initial condition and a set of admissible control inputs $\calU \subseteq L^2([0,T], \brn{m})$\footnote{$L^2([0,T], \brn{m})$ is the space of square-integrable $\brn{m}$-valued functions on $[0,T]$.}, and a \emph{single experiment} (or realization) $\re=(\rx_0, \ru)$ for a specific $\ru \in \calU$. An experiment induces an observed state trajectory, denoted by $\varphi([0,T] \mid A,B,\rx_0,\ru)$, where the solution at time $t \in [0,T]$ is given by:
\begin{align*}
\varphi(t \mid A,B,\rx_0,\ru)
= e^{At}\rx_0 + \int_{0}^{t} e^{A(t-s)}B\ru(s)\, ds. 
\end{align*}

Let $\Omega = \brn{n \times n} \times \brn{n \times m}$ denote the parameter space. We define the \emph{experiment-consistent} set as 
\begin{align}
    \label{eq:class-x0-u}
    \notag
    [A, B]_{(\rx_0, \ru)}:= \big\{\abtilde \in \Omega:\tilde{\varphi}([0,T]\spbar \abtildenobracket, \rx_0, \ru) \\
    =\varphi([0,T]\spbar A, B, \rx_0, \ru)\big\}.
\end{align}

This set describes all systems that induce the same observed trajectory and thereby represents the immediate uncertainty facing a practitioner after a single experiment. We say that $(A,B)$ is \emph{identifiable from $\re$} if and only if this set is a singleton,
\begin{align}
\label{eq:identifiability_singleton}
[A,B]_{\re}=\{(A,B)\}.
\end{align}

To analyze fundamental limits of identifiability independently of degenerate realizations (e.g., $\ru=\mathbf{0}$), we also define the \emph{design-consistent} set:
\begin{align}
    \label{eq:class-x0-calU}
    \notag
    [A, B]_\calE&:= \big\{\abtilde \in \Omega: \forall \ru \in \calU, \\
    \notag
    & \varphi([0,T]\spbar \abtildenobracket, \rx_0, \ru) 
    =\varphi([0,T]\spbar A, B, \rx_0, \ru)\big\} \\
    &=\underset{\ru \in \calU}{\bigcap} [A, B]_{(\rx_0, \ru)}.
\end{align}
This set represents the theoretical limit of information extractable given the experiment class $\mathcal{E}=(\rx_0, \mathcal{U})$. These systems cannot be distinguished for any choice of control input $\ru \in \mathcal{U}$. Note that, for any realized experiment $\re \in \calE$, we have the following nesting relationship (see also \Cref{fig:PEness-collapse}) :
\begin{align}
    \underbrace{\{\abpar\}}_\text{Truth} \subseteq \underbrace{[A, B]_\calE}_{\substack{\text{Design-} \\ \text{ consistent Set}}} \subseteq \underbrace{[A, B]_\re}_{\substack{\text{Experiment-} \\ \text{ consistent Set}}} \subseteq \underbrace{\Omega}_\text{Prior}. \label{eq:set_hierarchy}
\end{align}

In practice, if any prior structural information about $\abpar$ is available, we can further restrict the prior parameter space $\Omega \subseteq \brn{n \times n} \times \brn{n \times m}$, assuming $\abpar \in \Omega$. Our analysis naturally extends to these settings by intersecting the design- and experiment consistent sets in \Cref{eq:set_hierarchy} with the restricted prior $\Omega$.

\textbf{Objective.} Our goal is to (i)~ characterize the conditions under which a system can be uniquely identified from a single experiment $(\rx_0, \ru)$ (i.e., $[A,B]_{\re} = \{\abpar\}$), and (ii)~ characterize the subsystem that can be uniquely identified in the general case where the full system $\abpar$ is not necessarily identifiable. To do so, we separate the geometric limits imposed by $\rx_0$ from the excitation properties of $\ru$, and derive algebraic conditions under which $[A, B]_\re$ collapses to the structural limit $[A,B]_\calE$, and ultimately to the ground truth $\{(A,B)\}$.

To formalize excitation properties, we adopt the continuous-time notion of persistent excitation provided by \cite{rapisarda2022persistency}.

\begin{definition}[Persistent Excitation]
\label{def:PEness}
An at least $r-1$ times continuously differentiable signal $\ru:[0,T] \to \brn{m}$ (with $T \in \mathbb{R}_+$) is persistently exciting (PE) of order $r$ if, for every $\eta^\top \in \brn{rm}$ it holds that
\begin{align}
\label{eq:PEness-cont}
     \left(\forall t \in [0,T]:\eta^\top \begin{bmatrix}
         \ru(t) \\ \ru^{(1)}(t) \\ \vdots \\ \ru^{(r-1)}(t)
     \end{bmatrix}=\mathbf{0} \right) \implies 
     \left(\eta =\mathbf{0}\right)
\end{align}
where $\ru^{(j)}$ denotes the $j-$th derivative \footnote{For the discrete-time setting, the analogue is that the input Hankel matrix of order $r$ has full rank. The formal correspondence is established in \Cref{proof:ZOH-invariance}, Invariants under ZOH Sampling.}. 
\end{definition}

In addition, we require a non-pathological design assumption ensuring that the equality over responses over the admissible class $\calU$ extends to all $L^2$ control inputs, allowing us to analyze the structural identifiability limits of the system.

\begin{assum}[Richness of the Input Class]
\label{assum:input-class-richness}
    The admissible input class $\mathcal{U} \subseteq L^2([0,T], \brn{m})$ is dense in  $L^2([0,T],\brn{m})$, contains the zero signal $u \equiv 0$, and contains at least one persistently exciting input (\Cref{def:PEness}).
\end{assum}

\textbf{Extension to Multiple Experiments.} If instead of a single experiment $\re$, we have access to $N$ experiments $E:=\{\re_i\}_{i=1}^N$ (possibly with different initial conditions and/or admissible input classes) from the same underlying system $(A,B)$, then the set of systems consistent with the set of observations $E$ is given by $[A,B]_E \;=\; \bigcap_{i=1}^{N} [A,B]_{\re_i}$.
Here, $(A,B)$ is identifiable from $E$ if and only if $[A,B]_E$ is a singleton. This intersection formalizes the intuition that the uncertainty class shrinks strictly when additional experiments probe previously unexcited directions of the state space.

\textbf{Extension to Discrete-Time.} While we present the analysis in continuous time, our framework is algebraic and translates seamlessly to discrete-time settings. For piecewise-continuous inputs applied via zero-order-hold (ZOH), the exact discrete-time model $(A_d, B_d) = (e^{A\Delta t},\; \int_0^{\Delta t} e^{As}B\spone ds)$ preserves our identifiability results under standard nonpathological sampling. Furthermore, the continuous-time persistency of excitation requirement maps directly to the discrete condition that the input Hankel matrix of order $k+1$ has full row rank (see \cref{proof:ZOH-invariance}).

\section{Theoretical Analysis}\label{sec:ta}

This section provides an \emph{experiment-conditional} characterization of identifiability from a single experiment $\re=(\rx_0, \ru)$. We proceed in four logical steps: 
First, we introduce the \emph{visible subspace} $V(\rx_0)$-- the geometric limit of what an experiment class $\mathcal{E}$ can probe.
Second, we link the experiment class to a single experiment and demonstrate that a single persistently exciting control input maximizes information content in a single trajectory, forcing the experiment-consistent set to collapse to the design-consistent set, i.e., $[A,B]_\re = [A,B]_{\mathcal{E}}$. Third, we develop a computable, closed-form parameterization of $[A, B]_\re$ that characterizes the identifiable subsystem for an informative experiment $\re$. Finally, we propose two explicit, algebraic criteria that quantitatively describe a system's proximity to structural non-identifiability for an informative experiment.

\textbf{The Visible Subspace.} Excitation enters the system through two primary channels: the ``initial excitation'' injected by the state $\mathbf{x}_0$ and the control inputs $u \in \mathcal{U}$ filtered through the map $B$. The excitation propagates through the state space via the repeated action of the transition matrix $A$, defining a structural ``information ceiling''---the smallest $A-$invariant subspace that contains all potential sources of excitation.

\begin{definition}[Visible subspace]
\label{def:visible-subspace}
For any system $(A,B)\in\Omega$ and initial state $\rx_0 \in \brn{n}$, define the \emph{visible subspace} as
\begin{align}
\label{eq:visible-subspace}
V(\rx_0)
&\;\coloneqq\;
\mathcal{K}_n\!\big(A,\,[\rx_0 \;\; B]\big) \\
\notag
&\;=\;
\operatorname{span}\Big\{[\rx_0 \;\; B],\,A[\rx_0 \;\; B],\,\dots,\,A^{n-1}[\rx_0 \;\; B]\Big\},
\end{align}
where $\mathcal{K}_n$ denotes the $n-$th order Krylov span generated by $\Kcore \in \brn{n \times (m+1)}$, the column-wise-stacked initial state vector and control matrix. Equivalently, $V(\rx_0)$ is the smallest $A$-invariant (i.e., $AV(\rx_0)\subseteq V(\rx_0)$) subspace containing the initial state and the input image, $\{\rx_0\}\cup\mathrm{Im}(B)$. (The formal proof is provided in \cref{proof:minimality-of-V(E)}).
\end{definition}

The visible subspace imposes a strict boundary: $V(\rx_0)$ contains all directions of excitation and is invariant under $A$, therefore the trajectory induced by any experiment $\re$ is mathematically locked within the visible subspace. Regardless of the input's complexity or richness, the system cannot develop components in the ``blind spot'' outside $V(\rx_0)$. We formalize this restriction as \emph{orbit confinement}:

\begin{lemma}[Orbit confinement]
\label{lemma:orbit-confinement}
For any system $(A,B)\in\Omega$, initial state $\rx_0 \in \brn{n}$, and admissible input $\ru \in \mathcal{U}$, the resulting state trajectory is strictly confined to the visible subspace: $\varphi(t|A, B, \exptuple)\in V(\rx_0)$ for all $t \geq 0$.
\end{lemma}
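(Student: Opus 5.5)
We prove the claim directly from the variation-of-constants formula for $\varphi(t\mid A,B,\rx_0,\ru)$, treating its two terms separately. The only structural fact we need is that $V(\rx_0)$ is $A$-invariant and contains $\rx_0$ and $\mathrm{Im}(B)$; this is part of \Cref{def:visible-subspace}. We will also check it directly via Cayley--Hamilton, since $A^n$ is a linear combination of $I,A,\dots,A^{n-1}$, so $A$ maps the Krylov span into itself.

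\emph{Step 1 (free response).} Because $V(\rx_0)$ is $A$-invariant, $A^k\rx_0\in V(\rx_0)$ for every $k\ge 0$. Each partial sum $\sum_{k=0}^{N}\frac{t^k}{k!}A^k\rx_0$ therefore lies in $V(\rx_0)$. A linear subspace of $\brn{n}$ is finite-dimensional and hence closed, so the limit $e^{At}\rx_0$ also lies in $V(\rx_0)$. Alternatively, write $e^{At}=\sum_{k=0}^{n-1}\alpha_k(t)A^k$ by Cayley--Hamilton, which gives membership without any limit argument.

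\emph{Step 2 (forced response).} For each $s\in[0,t]$, the vector $B\ru(s)$ lies in $\mathrm{Im}(B)\subseteq V(\rx_0)$. By Step 1 applied to an arbitrary vector of $V(\rx_0)$, the space is invariant under $e^{A(t-s)}$, so the integrand $e^{A(t-s)}B\ru(s)$ lies in $V(\rx_0)$ for almost every $s$. To pass the integral through, let $P$ be any matrix with $\ker P=V(\rx_0)$, for instance the orthogonal projector onto $V(\rx_0)^\perp$. By linearity of the Bochner/Lebesgue integral,
\begin{align*}
P\int_0^t e^{A(t-s)}B\ru(s)\,ds=\int_0^t P e^{A(t-s)}B\ru(s)\,ds=0 .
\end{align*}
The integral is well-defined because $\ru\in L^2\subset L^1$ on $[0,t]$ and $e^{A(t-s)}$ is continuous. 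If $t>T$, we regard $\ru$ as extended, for example by zero, and the same argument applies.

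\emph{Step 3 (conclusion).} The trajectory $\varphi(t\mid A,B,\rx_0,\ru)$ is the sum of two vectors in the subspace $V(\rx_0)$, so it lies in $V(\rx_0)$ for all $t\ge0$.

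The only mildly delicate step is Step 2: the integrand is merely measurable, so we must justify that the integral of a function valued in a subspace stays in that subspace. The annihilator argument with $P$ settles this without any further continuity assumptions on $\ru$.
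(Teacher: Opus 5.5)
Your proof is correct and follows the same route as the paper's: you split $\varphi$ into free and forced responses, use $A$-invariance of $V(\rx_0)$ to get invariance under $e^{At}$, and conclude that the integral of a $V(\rx_0)$-valued integrand stays in $V(\rx_0)$. Your annihilator argument makes rigorous the paper's one-line appeal to closedness of a finite-dimensional subspace. Your extension of $\ru$ beyond $[0,T]$ handles the mismatch between $t \geq 0$ in the statement and $t\in[0,T]$, which the paper's proof resolves by only proving the claim on $[0,T]$.
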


\begin{figure}[ht]
    \centering
    \hspace{-0.7cm}
    \includegraphics[width=0.8\linewidth] 
    {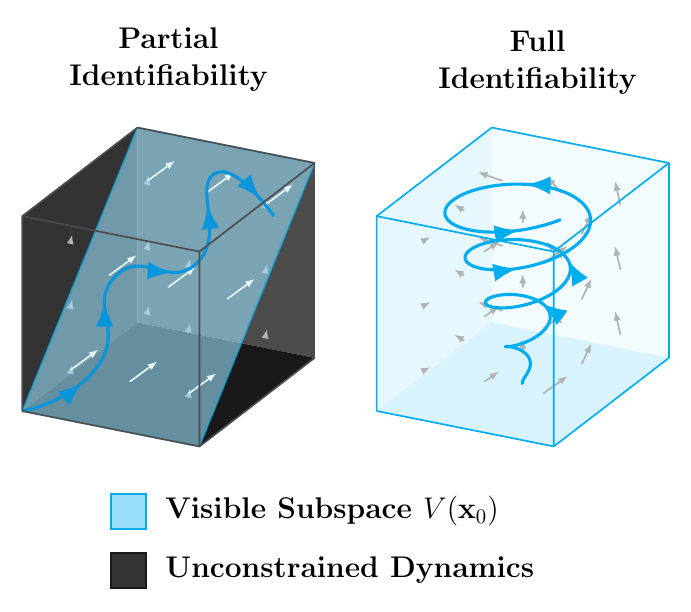}
    \caption{\textbf{Partial vs. Full Identifiability of a Three-dimensional System.} (Left) The realized trajectory stays in a proper visible subspace (blue), so only the dynamics induced on that subspace are identifiable (i.e., this restriction represents the identifiable subsystem); dynamics on the complement (gray) are unconstrained by the experiment. (Right) The visible subspace spans the entire state space, and thus the full system is identifiable.}
    \label{fig:teaser-pic}
\end{figure}
 
Orbit confinement allows us to separate the state space into the \emph{visible} directions probed by the experiment, which span $V(\rx_0)$, and an arbitrary complementary subspace $W$ of unexcited ``blind'' directions. Using a basis adapted to the direct sum $V(\rx_0)\oplus W=\brn{n}$, we can explicitly isolate the dynamics on $\vissub$-- which, as we will show, are uniquely determined by the experiment-- from the unconstrained dynamics on $W$.

\begin{lemma}[Block form in a $(V,W)$ basis]
\label{lemma:block-form-in-VW}
For any system $(A,B)\in\Omega$ and initial state $\rx_0 \in \brn{n}$, let $V(\rx_0)$ be as in \Cref{eq:visible-subspace} and $V\coloneqq V(\rx_0)$ with $k\coloneqq\dim(V)\le n$. Choose any complement $W$ such that $V\oplus W=\mathbb{R}^n$, and pick a change of coordinates $T=[P\;\;Q]$ with $\mathrm{Im}(P)=V$ and $\mathrm{Im}(Q)=W$. Then:
\begin{align}
\label{eq:block-form}
\notag
T^{-1}AT&=\begin{bmatrix}
A_V & A_*\\[2pt]
0 & A_W
\end{bmatrix}, \\
\;\;
T^{-1}B=\begin{bmatrix}
B_V\\[2pt]
0
\end{bmatrix}&,
\;\;
T^{-1}\rx_0=\begin{bmatrix}
\rx_{0,V}\\[2pt]
0
\end{bmatrix}.
\end{align}
In particular, any realized trajectory under $\ru \in \mathcal{U}$, $\{\rx(t)\}_{t\in[0,T]}$ depends only on $(A_V,B_V,\rx_{0,V})$ and is independent of $A_*$ and $A_W$. 
\end{lemma}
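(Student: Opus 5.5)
The plan is to read the block structure directly off the two defining properties of $V=V(\rx_0)$ from \Cref{def:visible-subspace}: $A$-invariance, $AV\subseteq V$, and the containment $\{\rx_0\}\cup\mathrm{Im}(B)\subseteq V$. Write $T^{-1}=\begin{bmatrix} L_1\\ L_2\end{bmatrix}$, partitioned conformally with $T=[P\;\;Q]$, so that $L_1P=I_k$, $L_2P=0$, $L_1Q=0$ and $L_2Q=I_{n-k}$. Since the columns of $T$ form a basis, a vector $v$ lies in $V=\mathrm{Im}(P)$ exactly when $L_2v=0$; this is the single fact that produces every zero block.

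First I will compute $T^{-1}AT=\begin{bmatrix} L_1AP & L_1AQ\\ L_2AP & L_2AQ\end{bmatrix}$. Because every column of $P$ lies in $V$ and $V$ is $A$-invariant, every column of $AP$ lies in $V$, so $L_2AP=0$. I then set $A_V\coloneqq L_1AP$, $A_*\coloneqq L_1AQ$ and $A_W\coloneqq L_2AQ$. Next, $\mathrm{Im}(B)\subseteq V$ gives $L_2B=0$, and I set $B_V\coloneqq L_1B$. Likewise $\rx_0\in V$ gives $L_2\rx_0=0$, and I set $\rx_{0,V}\coloneqq L_1\rx_0$. These steps are routine.

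For the "in particular" claim I will change coordinates to $z(t)=T^{-1}\rx(t)=(z_V(t),z_W(t))$. The transformed ODE reads
\begin{align*}
\dot z_V &= A_Vz_V + A_*z_W + B_V\ru, & z_V(0)&=\rx_{0,V},\\
\dot z_W &= A_Wz_W, & z_W(0)&=0.
\end{align*}
By uniqueness of solutions, $z_W\equiv 0$; this is consistent with \Cref{lemma:orbit-confinement}, which could be invoked instead. The first equation then reduces to $\dot z_V=A_Vz_V+B_V\ru$, so that
\begin{align*}
z_V(t)=e^{A_Vt}\rx_{0,V}+\int_0^t e^{A_V(t-s)}B_V\ru(s)\,ds
\end{align*}
and $\rx(t)=Pz_V(t)$. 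Equivalently, one can use the block-triangular form of the matrix exponential, $T^{-1}e^{At}T=\begin{bmatrix} e^{A_Vt} & *\\ 0 & e^{A_Wt}\end{bmatrix}$, and note that the starred block only ever multiplies zero components. Either way, the trajectory depends only on $(A_V,B_V,\rx_{0,V})$ for the fixed choice of $T$.

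The only point needing care is interpretive rather than technical. Independence of $A_*$ and $A_W$ should be read as follows: any pair $(\tA,\tB)$ that shares the same $V$, the same $T$, and the same $(A_V,B_V)$ produces the same trajectory. I will phrase the conclusion in that form. This phrasing also requires the ODE solution to be well defined for $\ru\in L^2$, which holds because the variation-of-constants formula makes sense in the Carathéodory sense.
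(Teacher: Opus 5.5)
Your proposal is correct and follows essentially the paper's argument. The zero blocks come from $A$-invariance of $V$ and from $\{\rx_0\}\cup\mathrm{Im}(B)\subseteq V$, and the adapted coordinates then reduce the dynamics to $\dot z_V=A_Vz_V+B_V\ru$. The differences are minor: you get $z_W\equiv 0$ from the triangular structure with zero initial condition (as the paper's main-text sketch does) rather than citing \Cref{lemma:orbit-confinement} as the appendix proof does, and your block rows $L_1,L_2$ of $T^{-1}$ state the ``no $W$-component'' step more precisely than the paper's wording.
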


This lemma yields the structural consequences of orbit confinement (\cref{lemma:orbit-confinement}) by showing that only the reduced dynamics on $\vissub$ can affect the observed trajectory. 
In coordinates adapted to $\brn{n}=V\oplus W$, the $A-$invariance of $V$ implies that $T^{-1}AT$ is block upper-triangular, while $\mathrm{Im}(B) \subseteq V$ and $\rx_0 \in V$ imply that both $T^{-1}B$ and $T^{-1}\rx_0$ have zero $W-$components. Hence, if $z(t)=T^{-1}\rx(t)$ and $z(t)=(z_V(t), z_W(t))$ then $z_W(t)$ satisfies an autonomous linear subsystem with zero initial state, and therefore $z_W(t)\equiv 0$ for all $t$. It follows that the realized trajectory depends only on the reduced triple $(A_V , B_V , \rx_{0,V})$ and is independent of the blocks $A_W$ and $A_*$. Equivalently, the visible subspace $V(\rx_0)$ determines the design-consistent set $[A,B]_{\mathcal E}$: no experiment generated from $(\rx_0,\mathcal U)$ can identify dynamics acting outside of $V(\rx_0)$. Any mechanistic claim must therefore be interpreted relative to the visible subspace induced by the experiment design.

\textbf{From Design Limits to Experiment Limits.}
The results above are \emph{structural}: they characterize the maximal information that is, in principle, recoverable from the experiment design $\mathcal{E}$. In practice, however, the practitioner often conducts only a single experiment $\re=(\rx_0,\ru)$ and thus faces the larger experiment-consistent set $[A,B]_{\re}$. In the following, we analyze under which conditions this larger set collapses to the structural limit $[A,B]_{\mathcal{E}}$. In particular, we show that this depends on the \emph{information content of the observed trajectory}: even when $(A,B)$ is structurally identifiable within $V(\rx_0)$, an uninformative control input may fail to excite all in-principle identifiable directions. This motivates characterizing the informativeness of an observed trajectory with respect to the visible subspace. 

\begin{figure}[ht]
    \centering
    \includegraphics[width=0.75\linewidth]{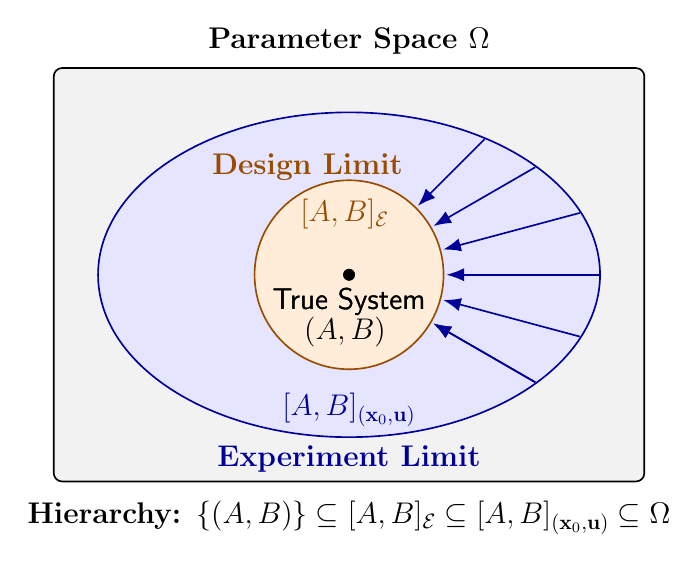}
    \caption{\textbf{Hierarchy of system uncertainty sets for a fixed initial state $\rx_0$.} Experiment-consistent set $[A,B]_\re$
    can be larger than the design-consistent set $[A,B]_{\mathcal E}$, but collapses to it under trajectory informativeness.}
    \label{fig:PEness-collapse}
\end{figure}

\begin{definition}[Informative Trajectory]
\label{def:informative_trajectory}
Let $k \coloneqq \dim(V(\rx_0))$. Let $P_V \in \mathbb{R}^{n \times k}$ be a matrix whose columns form an orthonormal basis for $V(\rx_0)$. We define the minimal coordinates $\xi(t) \in \mathbb{R}^k$ via the projection $\xi(t) \coloneqq P_V^\top \rx(t)$. Define the joint regressor $z(t) \coloneqq [\xi(t)^\top, \ru(t)^\top]^\top \in \mathbb{R}^{k+m}$.
A realized trajectory is \emph{informative} if the finite-time Gramian of the joint regressor $z(t)$ has full rank:
\[
\begin{aligned}
    \mathcal{G}_{[0,T]} &\coloneqq \int_0^T z(t)z(t)^\top dt \\
&= \int_0^T \begin{bmatrix} \xi(t) \ \ru(t) \end{bmatrix} \begin{bmatrix} \xi(t) \ \ru(t) \end{bmatrix}^\top dt \succ 0.
\end{aligned}
\]
\end{definition}
\cref{def:informative_trajectory} delivers an experiment-level condition that can be verified post-hoc from the observed trajectory: it asks whether the realized joint regressor spans all identifiable directions. The next lemma complements this with a pre-experiment design criterion: persistent excitation \citep{shimkin1987persistency, rapisarda2022persistency} of the input of sufficient order guarantees trajectory informativeness without requiring knowledge of the true $\abpar$.

\begin{lemma}[PE Transfer]
\label{lemma:pe_transfer}
Consider the restricted dynamics on the visible subspace $V(\rx_0)$ of dimension $k$:
\begin{align}
    \dot{\xi}(t) = A_V \xi(t) + B_V \ru(t), \;\; \xi(0)=\rx_{0,V},
\end{align}
where $A_V \coloneqq P_V^\top A P_V$, $B_V \coloneqq P_V^\top B$ and $\rx_{0,V}\coloneqq P_V^\top \rx_0$. Define the joint regressor $z(t):=\begin{bmatrix}
    \xi(t)^\top, \; \ru(t)^\top
\end{bmatrix}^\top$, and its finite-time Gramian $\mathcal{G}_{[0,T]}=\int_0^T z(t)z(t)^\top dt$. Then, if the input $\ru: [0,T] \to \brn{m}$ is PE of order at least $k+1$ in the sense of \cref{def:PEness}, the realized trajectory is informative in the sense of \cref{def:informative_trajectory}, i.e., $\mathcal{G}_{[0,T]}\succ 0$.
\end{lemma}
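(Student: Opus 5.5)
The plan is to argue by contradiction on the quadratic form of $\gram$. Suppose $\gram$ is singular, so there is a nonzero $(\alpha,\beta)\in\brn{k}\times\brn{m}$ with $\int_0^T(\alpha^\top\xi(t)+\beta^\top\ru(t))^2\,dt=0$. Since $\ru$ is PE of order at least $k+1$, it is at least $k$ times continuously differentiable, so $\xi$ is continuous (indeed $C^{k+1}$). Hence
\begin{align*}
\alpha^\top\xi(t)+\beta^\top\ru(t)=0 \quad\text{for all } t\in[0,T].
\end{align*}
We first use the PE property to force $\beta=0$ and $\alpha\perp$ the reachable directions of $(A_V,B_V)$. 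We then use the visible-subspace structure to force $\alpha=0$.

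\textbf{Step 1 (eliminate $\xi$).} Let $p(s)=s^k+c_{k-1}s^{k-1}+\dots+c_0$ be the characteristic polynomial of $A_V$, and set $c_k=1$. Differentiating the ODE gives
\begin{align*}
\xi^{(j)}=A_V^j\xi+\sum_{i<j}A_V^{j-1-i}B_V\ru^{(i)} .
\end{align*}
By Cayley--Hamilton, $p(A_V)=0$, so
\begin{align*}
p(\tfrac{d}{dt})\xi=\sum_{i=0}^{k-1}M_i\ru^{(i)}, \qquad M_i=\sum_{j>i}c_jA_V^{j-1-i}B_V .
\end{align*}
Applying $p(\tfrac{d}{dt})$ to the scalar identity yields $\eta^\top[\ru;\ru^{(1)};\dots;\ru^{(k)}]\equiv 0$ with
\begin{align*}
\eta_k=\beta, \qquad \eta_i=M_i^\top\alpha+c_i\beta \quad (i<k).
\end{align*}
PE of order $k+1$ (\Cref{def:PEness}) gives $\eta=0$. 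This yields $\beta=0$ and then $\alpha^\top M_i=0$ for all $i$.

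\textbf{Step 2 (triangular unwinding).} Read the conditions from $i=k-1$ downward. First, $M_{k-1}=B_V$ gives $\alpha^\top B_V=0$. Next, $M_{k-2}=A_VB_V+c_{k-1}B_V$ gives $\alpha^\top A_VB_V=0$. Inductively, $\alpha^\top A_V^jB_V=0$ for $j=0,\dots,k-1$.

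\textbf{Step 3 (initial-state directions).} With $\beta=0$ we have $\alpha^\top\xi\equiv0$. Differentiating repeatedly, and using $\alpha^\top A_V^jB_V=0$ to kill the input terms, gives $\alpha^\top A_V^j\xi(t)\equiv0$ for all $j$. Evaluating at $t=0$ yields $\alpha^\top A_V^j\rx_{0,V}=0$. Thus $\alpha$ annihilates $\calK_k(A_V,[\rx_{0,V}\;B_V])$.

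\textbf{Step 4 (Krylov space is all of $\brn{k}$).} This is the step needing care, because it needs $P_V$ to intertwine the full and restricted dynamics. Since $V(\rx_0)$ is $A$-invariant (\Cref{def:visible-subspace}) and $P_V$ has orthonormal columns spanning it, $AP_V=P_VA_V$. Also $\rx_0=P_V\rx_{0,V}$ and $B=P_VB_V$. Hence
\begin{align*}
P_V\,\calK_k(A_V,[\rx_{0,V}\;B_V])=\calK_n(A,\Kcore)=V(\rx_0).
\end{align*}
By Cayley--Hamilton, order $k$ Krylov powers suffice in $\brn{k}$. Since $P_V$ is injective, the Krylov space in Step 3 is all of $\brn{k}$, so $\alpha=0$. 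This contradicts $(\alpha,\beta)\neq0$, and therefore $\gram\succ0$.

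The main obstacle is the bookkeeping in Step 1. One must verify that the top coefficient of $\ru^{(k)}$ is exactly $\beta$, which holds because $p$ is monic. One must also verify that the regularity of $\ru$ assumed in \Cref{def:PEness} suffices to justify $k$ differentiations.
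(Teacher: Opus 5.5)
Your proof is correct and follows essentially the same route as the paper. Both arguments take an annihilator $\alpha^\top\xi+\beta^\top\ru\equiv0$, eliminate $\xi$ with the characteristic polynomial of $A_V$ and Cayley--Hamilton, and invoke PE of order $k+1$ to kill all coefficients; they then unwind triangularly to $\alpha^\top A_V^jB_V=0$, recover the initial-state directions, and conclude from fullness of the restricted Krylov space. Your Step 4 derives $AP_V=P_VA_V$, $\rx_0=P_V\rx_{0,V}$, $B=P_VB_V$ from $A$-invariance to get $\calK_k(A_V,[\rx_{0,V}\;B_V])=\brn{k}$, and your continuity remark turns a vanishing quadratic form of $\gram$ into a pointwise identity; both make explicit steps the paper only asserts.
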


Next, we formalize the link between the experiment-consistent set and the design-consistent set. Intuitively, if a single experiment is sufficiently informative, it exhaustively sweeps the visible subspace, leaving us with the same uncertainty as if we had performed all possible experiments $\{(\rx_0, \ru):\ru \in \calU\}$.

\begin{theorem}[The Bridge]
\label{theorem:bridge}
Let $(A,B)\in\Omega$ and let $\mathcal{E}=(\rx_0,\calU)$ be an experiment design. If the realized trajectory generated by $\ru \in \mathcal{U}$ is informative (Def. \ref{def:informative_trajectory}), then the single-trajectory consistency set collapses to the design-level consistency set: $[A, B]_{\re} = [A, B]_{\mathcal{E}}$.
\end{theorem}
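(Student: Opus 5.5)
The inclusion $[A,B]_{\mathcal E}\subseteq[A,B]_{\re}$ is immediate from \Cref{eq:class-x0-calU}, since $\ru\in\calU$. The plan for the reverse inclusion has three steps. First, turn trajectory agreement for the single input $\ru$ into a pointwise algebraic identity. Second, use informativeness (\Cref{def:informative_trajectory}) to show that $\tA$ agrees with $A$ on $V(\rx_0)$ and that $\tB=B$. Third, conclude by a uniqueness argument for linear ODEs that $\abtilde$ reproduces the true trajectory for every $\ru'\in\calU$.

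\emph{Step 1 (algebraic identity).} Fix $\abtilde\in[A,B]_{\re}$ and write $\rx(t)=\traj{t}{A,B,\rx_0,\ru}$. Because $\ru\in L^2\subseteq L^1$ on $[0,T]$, both trajectories are absolutely continuous. They satisfy their respective ODEs for almost every $t$. Since the trajectories coincide on $[0,T]$, their derivatives agree almost everywhere, which gives
\[
(\tA-A)\rx(t)+(\tB-B)\ru(t)=0 \quad\text{for a.e. } t\in[0,T].
\]
By \Cref{lemma:orbit-confinement}, $\rx(t)\in V(\rx_0)$, so $\rx(t)=P_V\xi(t)$ with $\xi(t)=P_V^\top\rx(t)$. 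Setting $M:=[(\tA-A)P_V\;\;\tB-B]\in\brn{n\times(k+m)}$, the identity becomes $Mz(t)=0$ for a.e. $t$.

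\emph{Step 2 (use of informativeness).} Multiply $Mz(t)=0$ by $z(t)^\top$ and integrate over $[0,T]$ to get $M\gram=0$. Since $\gram\succ0$ by assumption, this forces $M=0$. Hence $\tB=B$ and $\tA P_V=AP_V$, so $\tA$ and $A$ coincide on $V(\rx_0)$. In particular $V(\rx_0)$ is $\tA$-invariant, because $\tA V=AV\subseteq V$. It also contains $\rx_0$ and $\mathrm{Im}(\tB)=\mathrm{Im}(B)$. By the minimality characterization in \Cref{def:visible-subspace}, the visible subspace of $\abtilde$ is contained in, and in fact equals, $V(\rx_0)$.

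\emph{Step 3 (extension to all inputs).} Take any $\ru'\in\calU$ and let $\rx'(t)$ be the true trajectory. By \Cref{lemma:orbit-confinement}, $\rx'(t)\in V(\rx_0)$, so $\tA\rx'(t)=A\rx'(t)$. Together with $\tB=B$, this shows that $\rx'$ also solves $\dot{\rx}=\tA\rx+\tB\ru'$ with $\rx(0)=\rx_0$. Uniqueness of Carathéodory solutions of linear ODEs with $L^1$ forcing (equivalently, the variation-of-constants formula) then gives $\traj{[0,T]}{\tA,\tB,\rx_0,\ru'}=\rx'$. Hence $\abtilde\in[A,B]_{\mathcal E}$.

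Alternatively, in the coordinates of \Cref{lemma:block-form-in-VW}, Step 2 says that $\abtilde$ shares the blocks $A_V,B_V$ and the zero lower blocks with $(A,B)$. The lemma then directly gives that every trajectory depends only on $(A_V,B_V,\rx_{0,V})$. The main technical point I expect is Step 1: justifying the passage from equality of trajectories to the a.e. identity when $\ru$ is only $L^2$. This is handled by absolute continuity of the variation-of-constants solution, or by avoiding differentiation altogether: write both solutions in integral form and apply Grönwall-type uniqueness. The remaining steps are linear algebra.
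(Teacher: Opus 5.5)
Your proposal is correct and follows the paper's proof essentially step for step. You differentiate the trajectory identity to get $[(\tA-A)P_V\;\;\tB-B]\,z(t)=0$, multiply by $z(t)^\top$ and integrate to obtain $M\gram=0$, then invert the Gramian. The differences are minor and both routes are valid. You justify the differentiation almost everywhere for $L^2$ inputs, which the paper glosses over. You also finish with orbit confinement plus ODE uniqueness, whereas the paper cites the sufficiency half of its design-limit characterization, which shows via power series that $e^{\tA t}$ and $e^{At}$ agree on $V(\rx_0)$.
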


\Cref{theorem:bridge} guarantees that a single informative experiment suffices to reach the design-level information ceiling.

\textbf{Experiment-Conditional Identifiability.} 
Next, we characterize the structure of the experiment-consistent set under an informative experiment by combining \Cref{theorem:bridge} and \Cref{lemma:block-form-in-VW}. Since \cref{theorem:bridge} guarantees $[A, B]_{\re} = [A, B]_{\mathcal{E}}$ under informativeness, this characterization simultaneously describes the design-consistent set: a single informative experiment achieves the full information ceiling imposed by the experiment class $\mathcal{E}$. Specifically, all experiment-consistent systems must agree on the dynamics restricted to the visible subspace $V(\rx_0)$, while they may differ arbitrarily on the complement $W$. Accordingly, full identifiability holds precisely when the visible subspace spans the entire state space (i.e., $V(\rx_0)=\brn{n}$); otherwise, identifiability is partial and only the visible subsystem, i.e., the dynamics restricted to the visible subspace, is identified by the experiment. \Cref{fig:teaser-pic} illustrates the difference between partial and full identifiability.

\begin{theorem}[Experiment-conditional identifiability]
    \label{theorem:identifiability}
    Let $\abpar \in \Omega$ and let $\re=(\exptuple)$ be a realized experiment satisfying \cref{def:informative_trajectory}. Define the visible subspace $V(\rx_0)$ as in \cref{def:visible-subspace}, and let $k\coloneqq\dim{V(\rx_0)}\leq n$. Choose any complement $W$ such that $V(\rx_0)\oplus W =\brn{n}$, and let $T=[P \;\;Q]$ be any basis adapted to $V(\rx_0)\oplus W=\brn{n}$.
    \begin{enumerate}[label=\textbf{(\roman*)},leftmargin=*]
    \item \textbf{Identifiable subsystem.} For any $\abtilde \in [A,B]_\re$, we necessarily have
    \begin{align*}\tilde{A}\big|_{V(\rx_0)}=A\big|_{V(\rx_0)} \;\; \text{and} \;\; \tilde{B}=B.
    \end{align*}
    Conversely, if $\tilde{A}|_{V(\rx_0)}=A|_{V(\rx_0)}$ and $\tilde{B}=B$, then $\abtilde \in [A,B]_\re$. Hence, the restriction $(A|_{V(\rx_0)}, B)$ is uniquely determined by the single realized experiment $\re$, even when $\abpar$ as a whole is not identifiable.
    \item \textbf{Characterization of the experiment-consistent set $[A,B]_\re$.} The matrices $\abtilde \in [A,B]_\re$ can be characterized as:
    \begin{align}
    \label{eq:equivalence-class}
        \notag
        &[A, B]_\re=\Big\{\abtilde \in \Omega: \inv{T}\tilde{A}T=\begin{bmatrix}
            A_V & \Theta \\
            0 & \Psi
        \end{bmatrix}, \;\; \\
        &\qquad \tilde{B}=B, \;\;\begin{bmatrix} \Theta \\ \Psi \end{bmatrix} \in \brn{n \times (n-k)}  \Big\},
    \end{align}
    with $n(n-k)$ degrees of freedom.
    \item \textbf{Full identifiability.} The system $(A,B)$ is uniquely identifiable from $\re$ (i.e. $[A, B]_\re=\{\abpar\}$) if and only if $V(\rx_0)=\brn{n}$. Equivalently, the $A-$cyclic Krylov sequence of order $n$ generated by $\Kcore$ has rank $n$.
    \end{enumerate}
\end{theorem}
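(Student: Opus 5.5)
The plan is to reduce everything to a linear-regression identity on the visible subspace, using \Cref{theorem:bridge} and \Cref{lemma:block-form-in-VW}. Take any $\abtilde\in[A,B]_\re$. Both systems produce the same trajectory $\rx(t)$ on $[0,T]$, and $\rx(t)\in V(\rx_0)$ by \Cref{lemma:orbit-confinement}. I would first argue that $\rx$ is absolutely continuous. Differentiating the two mild-solution identities then gives
\[
(\tilde A - A)\rx(t) + (\tilde B - B)\ru(t) = 0
\]
for almost every $t$. Next write $\rx(t)=P_V\xi(t)$ and set $M\coloneqq[(\tilde A-A)P_V\;\;\tilde B-B]\in\brn{n\times(k+m)}$. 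The identity becomes $Mz(t)=0$ a.e. Multiplying on the right by $z(t)^\top$ and integrating yields $M\gram=0$. Since the trajectory is informative, $\gram\succ0$, so $M=0$. Hence $\tilde A P_V=AP_V$, which is $\tilde A|_{V(\rx_0)}=A|_{V(\rx_0)}$, and $\tilde B=B$. This proves the necessity half of (i).

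For the converse, suppose $\tilde A|_V=A|_V$ and $\tilde B=B$. Then $V$ is $\tilde A$-invariant, because $\tilde AV=AV\subseteq V$. It contains $\rx_0$ and $\mathrm{Im}(\tilde B)$, so $\tilde V(\rx_0)\subseteq V$. In fact the Krylov spaces coincide, since $\tilde A^jv=A^jv$ for all $v\in V$ by induction. I would then use that $\rx(t)$ solves $\dot\rx=A\rx+B\ru$ and stays in $V$, where $\tilde A$ agrees with $A$. So $\rx$ also solves $\dot\rx=\tilde A\rx+\tilde B\ru$ with the same initial state. Uniqueness of solutions of linear ODEs with $L^2$ forcing then gives equality of the two trajectories. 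Equivalently, I could apply \Cref{lemma:block-form-in-VW} to both systems: they share $(A_V,B_V,\rx_{0,V})$. This route does not even need informativeness.

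Part (ii) is a coordinate restatement of (i). In the basis $T=[P\;\;Q]$, the condition $\tilde AP=AP$ reads $T^{-1}\tilde AT e_i=T^{-1}ATe_i$ for the first $k$ columns. By \Cref{lemma:block-form-in-VW} those columns equal $\begin{bmatrix}A_V\\0\end{bmatrix}$. The last $n-k$ columns, corresponding to $\tilde AQ$, are unconstrained, which gives $n(n-k)$ free parameters $\begin{bmatrix}\Theta\\ \Psi\end{bmatrix}$. Any such choice satisfies the hypotheses of the converse in (i).

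For (iii), if $V(\rx_0)=\brn{n}$ then $k=n$ and the free block is empty, so the set is a singleton. If $k<n$, pick $\Psi\neq A_W$ (or $\Theta\neq A_*$) to obtain a distinct consistent $\tilde A$. The Krylov rank restatement follows from \Cref{def:visible-subspace}.

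The main obstacle is the first step: justifying the pointwise derivative identity when $\ru$ is only in $L^2$. I would avoid differentiation altogether. Subtract the two variation-of-constants formulas and use Grönwall or uniqueness for the integral equation $\rx(t)=\rx_0+\int_0^t(\tilde A\rx+\tilde B\ru)\,ds$. Equality of these integral forms for all $t$ implies that the integrands agree a.e. by Lebesgue differentiation, which is all that the Gramian argument requires.
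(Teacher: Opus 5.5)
Your proposal is correct and takes essentially the paper's route. The necessity half is exactly the Gramian-inversion argument from the proof of \Cref{theorem:bridge}, the converse is the same \emph{$\tilde A$ agrees with $A$ on the invariant subspace $V(\rx_0)$} argument used in the sufficiency half of the design-limit characterization, and (ii) and (iii) are read off in the adapted basis. The one difference is that you apply these arguments directly to $[A,B]_\re$ rather than going through $[A,B]_\re=[A,B]_\calE$, which spares you the richness assumption on the input class (the paper needs it for the necessity half of its design-limit characterization), and your almost-everywhere handling of the derivative identity for $L^2$ inputs is more careful than the paper's pointwise differentiation.
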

\textit{We provide the formal proof in \Cref{proof:identifiability-1}.}

By \Cref{theorem:bridge}, the experiment-consistent set equals the design-consistent set. Therefore, the characterization in \Cref{eq:equivalence-class} also describes $[A, B]_\calE$, and the criterion for full identifiability from a single experiment (\Cref{theorem:identifiability} (iii)) is equivalently a criterion for full identifiability from the design class $\calE$.

\textbf{Tests for Full Identifiability.} To test if a system is fully identifiable from a single experiment $\re$, we need to evaluate two conditions. First, we need to test if the induced experimental trajectory is informative (informative trajectory requirement). Trajectory informativeness can be ensured by using a persistently exciting input $\ru$ of order at least $k+1$ (\Cref{def:PEness}). Second, we need to test if the visible subspace spans the full state space, i.e., $V(\rx_0) = \brn{n}$ (full-visibility condition). We propose two algebraic criteria to test the latter condition.
These criteria build on a controllability test of the augmented system pair $(A, \Kcore)$. This is motivated by the fact that $V(\rx_0)$ is precisely the Kalman reachable subspace of the augmented pair $(A,\Kcore)$.

\begin{enumerate}[label=\textbf{(\roman*)},leftmargin=*]
    \item \textbf{(Left-eigenvector test).} Full identifiability fails if and only if there exists a left eigenpair $(\lambda, w)$ of $A$, with $w^\top A=\lambda w^\top$, such that $w^\top \rx_0=0$ and $w^\top B=\mathbf{0}$ (equivalently, $w^\top\Kcore=\mathbf{0}$). To quantify the proximity to this failure, for each left eigenpair $(\lambda_i, w_i)$, we define the normalized alignment:
    \begin{align*}
    \mu_i(A,B\mid \rx_0)&\coloneqq \frac{\|w_i^\top [\rx_0 \;\; B]\|_2}{\|w_i\|_2 \|\Kcore\|_2},
        \;\; \\
        \mu_{\min}&\coloneqq \min_i \mu_i \in [0,1].
    \end{align*}
    Under a sufficiently rich control input, the system is fully identifiable if and only if $\mu_{\min}>0$.
    \item \textbf{(Fixed-experiment PBH test).} The classical Popov--Belevitch--Hautus (PBH) criterion measures distance to uncontrollability via the matrix pencil $[\lambda I-A, B]$. Because $\rx_0$ is a fixed property of the executed experiment and not a system parameter subject to uncertainty, any unobservable mode must be strictly orthogonal to the exact nominal $\rx_0$. 
    Define $\rx_0^\perp \coloneqq \{w \in \brn{n}: w^\top\rx_0=0\}$ and let $Q\in\mathbb R^{n\times(n-1)}$ have orthonormal columns spanning $\rx_0^\perp$ (take $Q=I_n$ if $\rx_0=\mathbf{0}$). Let $\mathcal H(\lambda)\coloneqq[\lambda I-A,\ B]$. We define the fixed-experiment PBH margin as:
    \begin{align}
    \label{eq:pbh-test}
        d_{\mathrm{PBH}}(A,B\mid \rx_0) \coloneqq \inf_{\lambda\in \sigma(A)} \sigma_{\min}\big(Q^\top \mathcal H(\lambda)\big),
    \end{align}
    where $\sigma_\text{min}(\cdot)$ denotes the smallest singular value. Under a sufficiently rich control input, the system is fully identifiable if and only if $d_{\mathrm{PBH}}(A,B\mid \rx_0)>0$.
\end{enumerate}

Both quantities, $\mu_i(A,B\mid \rx_0)$ and $d_{\mathrm{PBH}}(A,B\mid \rx_0)$, can also be interpreted as \emph{margins} to non-identifiability (provided that the informative trajectory requirement is satisfied). Specifically, when the margins are very small, the inverse problem might be numerically ill-conditioned: an estimator that inverts a near-rank-deficient regressor might struggle with identifying the true system, even though the system is theoretically fully identifiable.

\section{Experiments}\label{sec:experiments}

We empirically validate our theoretical results on synthetic LTI systems. We focus on sparse systems as they are a natural testbed for studying identifiability failures \citep{casolo2025identifiabilitychallengessparselinear}.
Our goals are twofold: (i)~to show that sparse systems are commonly uncontrollable, making classical identifiability guarantees inapplicable, and (ii)~to validate \Cref{theorem:identifiability} 
by testing whether, even when the full system is not identifiable, the restriction of $\abpar$ to the visible subspace $V(\rx_0)$ is uniquely determined. To this end, we generate sparse-continuous system ensembles of $(A, B, \rx_0)$, simulate trajectories under sufficiently exciting controls, and benchmark four representative identification methods: Dynamic Mode Decomposition (DMDc) \citep{doi:10.1137/15M1013857}, Multivariable Output-Error State-Space (MOESP) \citep{6530030}, SINDy \citep{brunton2016discovering}, and neural ODEs (NODEs) \citep{chen2018neural} \footnote{In our setting, this is more precisely a gradient-trained linear ODE. We retain the NODE label to situate it within the \citet{chen2018neural} framework from which the training procedure is adopted.}. We compare full-system and visible-subsystem recovery across each.

\subsection{Relevance of Identifiability for Sparse Systems}
\label{ssection:sparse-systems}
We evaluate how sparsity impacts controllability and thereby standard identifiability guarantees. Moreover, we investigate whether identifiability can still hold for selected initial states, even when the system is not controllable.

\textbf{Experimental Setup.} We first generate a base ensemble of system matrices $\abpar$ using the Ginibre ensemble with sparsification:
\begin{align*}
    A_{ij}&=X_{ij}M_{ij}, \;\; X_{ij} \sim \frac{1}{\sqrt{n}}\mathcal{N}(0, 1), \;\; M_{ij}\sim \operatorname{Ber}(p),
\end{align*}
where $(1-p)$ is the target sparsity level, and analogously for $B$. We generate systems with varying dimensionality and sparsity levels and simulate $1000$ systems per dimensionality-sparsity combination. We evaluate whether a simulated system $\abpar$ is controllable by checking the rank of the Kalman controllability matrix \citep{wonham1974linear,sontag2013mathematical}, i.e. $\mathbf{1}\{\operatorname{rank}(\mathcal{C}_n(A, B))<n\}$.

To test if uncontrollable systems can still be identifiable for specific initial states, we use systems with intermediate densities $0.3 \leq p(A, B)\leq 0.7$, which avoids trivial extremes and yields a mixture of controllable and uncontrollable systems. We generate different types of initial conditions by varying their sparsity level. Concretely, for each sparsity level $p_{\rx_0}\in \{0.25, 0.5, 0.75, 1\}$, we generate an ensemble of initial states using the following procedure: 
\begin{enumerate}[leftmargin=*]
    \item Sample $\rx_0$ uniformly from the unit sphere $\mathbb{S}^{n-1}$,
    \item Apply a Bernoulli sparsification mask, keeping each entry independently with probability $p_{\rx_0}$, and
    \item Scale the initial states to have unit norm for $p_{\rx_0}<1$. 
\end{enumerate}
We sample $100$ initial states per $p_{\rx_0-}$value, pair each density-filtered system $\abpar$ with the resulting $\rx_0-$ensemble to form triples $(A, B, \rx_0)$, and compute the PBH identifiability criterion $d(A, B \spbar \rx_0)$. We label a triple as identifiable if $d(A, B \spbar \rx_0)>10^{-6}$.
Additional details on the experimental setup can be found in \cref{ssection:experimental-setup}. 

\begin{figure}[ht]
    \centering
    \includegraphics[width=0.315\textwidth]{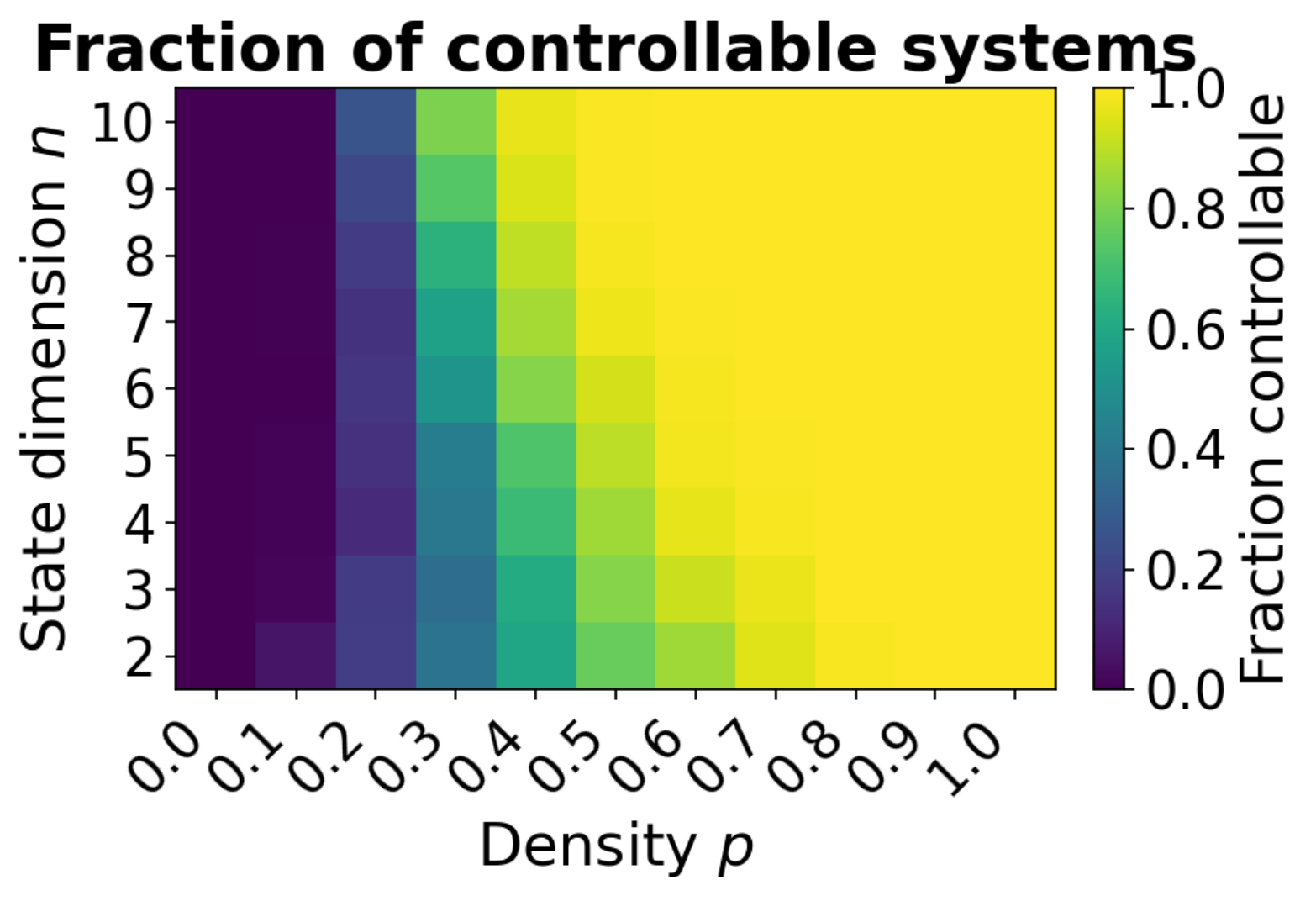}
    \caption{\textbf{Identifiability in sparse LTI systems.} Fraction of controllable systems $\abpar$ under varying densities ($\rx$-axis) and state dimensions ($\ry$-axis).}
    \label{fig:controllability-heatmap}
\end{figure}

\begin{table}[H]
    \centering
    \vspace{-0.2cm}
    \begin{tabular}{lcccc}
        \toprule
        $p_{\rx_0}$ & 0.25 & 0.50 & 0.75 & 1.0 \\
        \midrule
        $\tau_{\rx_0}$ & 0.27 & 0.52 & 0.76 & 1.00 \\
        \bottomrule
    \end{tabular}
    \caption{\textbf{Identifiability vs. Initial State Density.} Fraction of identifiable triples ($\tau_{\rx_0}=\frac{1}{N}\underset{i=1}{\overset{N}{\sum}}\mathbf{1}\{d_\text{PBH}(A_i, B_i \spbar \rx_0)>10^{-6}\}$) as $x_0$ density increases.}
    \label{tab:x0-density}
\end{table}

\textbf{Results.} \Cref{fig:controllability-heatmap} quantifies how often random systems are controllable as a function of sparsity and dimensionality. As sparsity increases, controllability becomes much less frequent. This is consistent with the intuition that fewer state couplings and fewer effective input pathways limit the reachability of the full state space. In this regime, classical identifiability guarantees, which rely on controllability, no longer apply; our analysis determines what a specific experiment can identify.

\Cref{tab:x0-density} shows that identifiability in the uncontrollable regime depends strongly on the initial condition (provided that there exists a persistently exciting control input). Dense initial conditions, i.e., higher values for $p$, tend to excite a larger visible subspace and therefore more often lead to full identifiability from a single experiment. In contrast, sparse initial conditions (small $p$) shrink $V(\rx_0)$, leading to non-identifiability. These results confirm that identifiability is experiment-conditional.

\subsection{Subsystem Recovery}
\label{ssection:subsystem-recovery}

We evaluate whether state-of-the-art system identification methods recover the identifiable subsystem given in \cref{theorem:identifiability}.
Concretely, we compare full-parameter recovery to recovery within $V(\rx_0)$ across varying observation noise and visibility levels $k=\operatorname{dim}(V(\rx_0))$.

\textbf{Experimental Setup.} We generate an ensemble of LTI systems $\abpar$. As in \cref{ssection:sparse-systems}, each matrix is sampled from a sparse-continuous distribution with density $p=0.1$ and state dimension $n=10$. The nonzero system parameters are sampled entrywise i.i.d. from a truncated Gaussian, i.e., $\mathcal{N}(0,1)$ truncated to $(-\infty, -0.1] \cup [0.1, \infty)$.  We stabilize $A$, so that the spectral radius satisfies $\rho(A)\leq 0.95$ and curate uncontrollable systems to focus on regimes where identification is challenging. We sample the initial state $\rx_0-$ensembles with different density levels, as in \Cref{ssection:sparse-systems}. For each $(A, B, \rx_0)$, we compute the visible subspace basis $P \in \brn{n \times k}$ and the number of ``visible dimensions'' $k$ using the Krylov sequence in \cref{def:visible-subspace} and SVD thresholding. We stratify triples by their visibility rank $k$, resulting in a total of $270$ triples, with $45$ samples per ``visible dimension'' $k \in \{5,6,7,8,9,10\}$.

For each triple, we generate a trajectory of length $T=80$ by simulating an experiment with a sufficiently exciting input. In the noise-robustness experiment (\cref{fig:fullsys-vs-vissys}, top row), we add i.i.d.\ Gaussian noise to the state trajectory with standard deviation scaled by $\sigma \in \{0, 10^{-3}, 10^{-2}, 10^{-1}, 0.5\}$. We use four system identification methods to obtain estimates $(\hat{A}, \hat{B})$: DMDc \citep{doi:10.1137/15M1013857}, MOESP \citep{6530030}, SINDy \cite{brunton2016discovering} and neural ODEs (NODEs) \citep{chen2018neural}. 

We report the full-system and visible-subsystem errors as 
\begin{align*}
\texttt{REE}_\text{full}&=\frac{\|[\hat{A} \;\; \hat{B}]-[A \;\; B]\|_F}{\frob{[A \;\; B]}}, \\
\texttt{REE}_\text{vis}&=\frac{\|[\hat{A}_V \;\; \hat{B}_V]-[A_V \;\; B_V]\|_F}{\frob{[A_V \;\; B_V]}},
\end{align*}
where $A_V=P_V^\top AP_V, \; B_V=P_V^\top B, \;\hat{A}_V=P_V^\top \hat{A}P_V,$ and $\hat{B}_V=P_V^\top \hat{B}$, using the true visible subspace basis $P_V$ computed from $(A, B, \rx_0)$. Thus, $\texttt{REE}_\text{vis}$ directly measures recovery of the uniquely identifiable subsystem $\left(A\big|_{V(\rx_0)}, B\right)$, while $\texttt{REE}_\text{full}$ reflects the ambiguity induced by unconstrained dynamics on complements of $V(\rx_0)$. Our goal is to validate the theoretical identifiability characterization rather than to propose an estimation method. Thus, using the oracle basis $P_V$ is sufficient for validation purposes. \cref{ssection:non-oracle-V(x0)} reports results under a full data-driven evaluation, where the basis $\hat{P}$ is estimated from the observed trajectory. The results are consistent with the oracle-based evaluation under lower noise levels.

\begin{figure*}[t]
    \centering
    \begin{minipage}{\textwidth}
        \centering
        \includegraphics[width=0.95\linewidth]{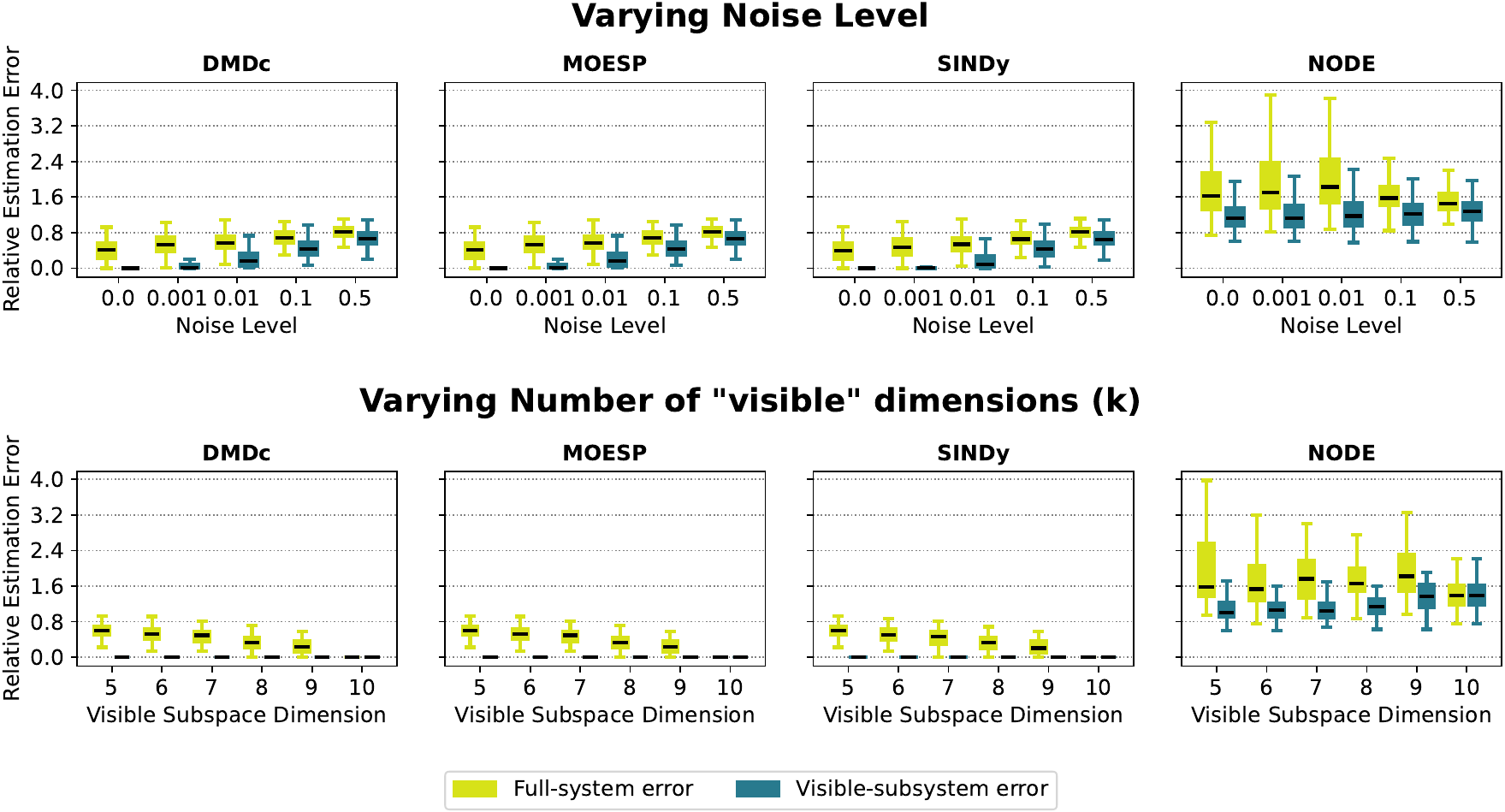}
        \caption{\textbf{Estimation error in full state space vs. visible subspace.} Comparison of the relative estimation error for the full system and the restriction to the visible subspace across four identification methods (DMDc, MOESP, SINDy, and NODE) for a system with dimensionality $n=10$. \textbf{(Top)} Errors for increasing Gaussian observation noise levels $\sigma \in \{0, 10^{-3}, 10^{-2}, 10^{-1}, 0.5\}$. \textbf{(Bottom)} Errors for visibility dimensions $k \in \{5, \dots, 10\}$ in the noise-free regime.}
        \label{fig:fullsys-vs-vissys}
    \end{minipage}
\end{figure*}

\textbf{Results.} \Cref{fig:fullsys-vs-vissys} (top) shows the estimation error of different identification algorithms depending on the observation noise level. In the noise-free setting, DMDc, MOESP, and SINDy recover a model from the experiment-consistent set $[A, B]_\re$ (though not necessarily the true system). In particular, the visible-subspace restriction achieves near-exact recovery, i.e., $\texttt{REE}_\text{vis} \approx 0$, whereas the full-system estimation error $\texttt{REE}_\text{full}$ remains bounded away from zero. As the observation noise increases, both errors increase for DMDc, MOESP, and SINDy, because the noise perturbs the observed trajectory and degrades the conditioning. Nevertheless, across the tested noise regime, the visible-subspace error remains consistently lower than the full-state error.

NODEs behave differently: while the full system error is still larger than the visible-subsystem error, both errors are higher than those of the closed-form estimators across all noise levels.
Moreover, in the noise-free setting, the visible-subspace error does not vanish. This is consistent with the observation that the NODE estimators do not perfectly reconstruct the training trajectory. We hypothesize that this is due to practical training effects: NODEs solve a nonconvex, gradient-based optimization problem coupled to numerical ODE integration, in contrast to the closed-form/least-squares estimators considered here. As the observation noise increases, the gap between $\texttt{REE}_\text{vis}$ and $\texttt{REE}_\text{full}$ for NODEs also becomes smaller. 

\Cref{fig:fullsys-vs-vissys} (bottom) shows the estimation error depending on the dimensionality of the visible subspace. When the visible dimension $k=\operatorname{dim}(V(\rx_0))$ is small, the experiment leaves a large portion of the state space unexcited, and the experiment-consistent class of systems is correspondingly large; this manifests as an increased estimation error $\texttt{REE}_\text{full}$ for all identification methods. As $k$ increases, the experiment constrains a larger portion of the dynamics and $\texttt{REE}_\text{full}$ decreases monotonically for all identification methods. Similar to the results on varying noise levels, DMDc, MOESP and SINDy uniquely recover the identifiable restriction $(A|_{V(\rx_0)}, B)$ (i.e. $\texttt{REE}_\text{vis}$ remains near zero) for all $k$, whereas NODEs do not achieve vanishing $\texttt{REE}_\text{vis}$. 

In \cref{ssection:stress-tests}, we report the full-system and visible-subsystem errors for high-dimensional systems ($n \in \{5, \dots, 100\}$) and for varying sampling rates $\Delta t \in \{0.05, \dots, 2\}$. The results are consistent with the results in \cref{fig:fullsys-vs-vissys}.

\section{Related Work}\label{sec:related-work}

Learning LTI systems from data has received considerable attention \citep{hardt2018gradient,simchowitz2018learning,sarkar2019near}. These works derive finite-time, non-asymptotic bounds on the error of least-squares and gradient-based estimators, characterizing how quickly the system matrices can be recovered as a function of trajectory length and noise level. A common feature is that identifiability is guaranteed by design: the system is driven by i.i.d. process/observation noise (or random inputs) that ensures the state visits all directions, making the estimation well-posed. We address the complementary, logically prior question: when the experiment does not provide such sufficient excitation, which parameters are recoverable at all? 

Classical identifiability for controlled LTIs \citep{kalman1963mathematical, BELLMAN1970329} establishes that controllability plus PE inputs suffice for identifiability. 
Identifiability for autonomous systems has been studied geometrically by \citet{stanhope2014identifiability}, who link global identifiability from an initial state to whether the orbit is confined to a proper $A-$invariant subspace, and by \citet{qiu2022identifiability}, who derive closed-form descriptions of experiment-consistent systems via Jordan decomposition. Our work extends this geometric perspective from autonomous to controlled systems. In this setting, persistent excitation becomes essential for identifiability. Moreover, rather than using a Jordan-decomposition characterization as in \citet{qiu2022identifiability}, we formulate the equivalence class through Krylov subspaces-- an approach which connects directly to controllability and yields simple algebraic tests.

Recent work studies identifiability under hidden confounders \citep{wang2024identifiability} and via multi-environment intervention diversity \citep{rajendran2024interventional}. These address different ambiguities-- unobserved latent variables and representation non-uniqueness under Gaussianity -- rather than the geometric limits imposed by a single experiment. Closest to our setting is the work by  \citet{yu2021controllability}, who show that data-driven predictive control remains equivalent to model-predictive control when controllability fails. Their focus is which \textit{control tasks} remain solvable without controllability; ours is which \textit{parameters} remain identifiable, a logically prior question that characterizes the experiment-induced epistemic uncertainty.

\section{Discussion}\label{sec:discussion}

We characterize experiment-conditional identifiability for controlled LTI systems without assuming controllability-- a setting that classical theory \citep{kalman1963mathematical, willems2005note} and the autonomous-system literature \citep{stanhope2014identifiability, qiu2022identifiability} both leave unresolved. We derive the visible subspace, the state space that an experiment can probe. If the visible subspace is a proper subspace of the state space, the system is only partially identifiable. 
In that case, any system identification method that fits the observed trajectory is blind to the dynamics outside of the visible subspace, and conclusions about those directions are unwarranted regardless of training fit.
Empirically, we show that sparse systems are often uncontrollable, but system identification methods still successfully recover the identifiable subsystem.

\section*{Acknowledgements}
This work has been supported by the German Federal Ministry of Education and Research (Grant: 01IS24082).

\bibliography{bibliography}

\newpage

\onecolumn
\renewcommand\thanks[1]{}
\title{Limits of Learning Linear Dynamics from Experiments\\(Supplementary Material)}
\maketitle

\appendix
\crefalias{section}{appendix}
\crefalias{subsection}{appendix}

\section{Proofs}
\label{ssection:proofs}

\paragraph{Proof: Minimality of $V(\rx_0)$}
\begin{proof}
    \label{proof:minimality-of-V(E)}
    We want to show that $V(\rx_0):=\mathcal{K}_n(A, \Kcore)$ is the strictly smallest $A-$invariant subspace containing both the initial state $\rx_0$ and the input image $\mathrm{Im}(B)$.

    \textbf{1. $V(\rx_0)$ satisfies the conditions.} By the definition of the Krylov sequence, the $0-$th order terms include $\rx_0$ and the columns of $B$, meaning $\{\rx_0\}\cup \operatorname{Im}(B)\subseteq V(\rx_0)$. Furthermore, multiplying any vector in $V(\rx_0)$ by $A$ yields a linear combination of terms up to $A^n\Kcore$. By the Cayley-Hamilton theorem, $A^n$ can be expressed as a linear combination of lower powers $A^0, \dots, A^{n-1}$. Thus, $AV(\rx_0) \subseteq V(\rx_0)$, making it $A-$invariant.

    \textbf{2. Minimality.} Let $S \subseteq \brn{n}$ be any arbitrary $A-$invariant subspace $(AS \subseteq S$) that contains $\{\rx_0\}\cup \mathrm{Im}(B)$. Because 
    $\rx_0 \in S$ and $S$ is $A-$invariant, we necessarily have $A^{k}\rx_0 \in S$ for all $k \in \{0, \dots, n-1\}$. By the same logic, because $\mathrm{Im}(B) \subseteq S$, we have $A^k\mathrm{Im}(B)\subseteq S$  for all $k \in \{0, \dots, n-1\}$. Because $S$ is a linear subspace, it is closed under vector addition and scalar multiplication. Therefore, $S$ must contain all linear combinations of these constituent vectors. Consequently, the entire span of these vectors is contained within $S$:
    \begin{align*}
        \operatorname{span}\left\{\Kcore, \dots, A^{n-1}\Kcore\right\} \subseteq S,
    \end{align*}
    which is exactly $V(\rx_0)\subseteq S$. Since $V(\rx_0)$ is contained within every possible $A-$invariant subspace $S$ that satisfies these conditions, $V(\rx_0)$ is by definition the minimal such subspace.
\end{proof}

\paragraph{Proof (\Cref{lemma:orbit-confinement}): Orbit confinement.}
\begin{proof}
\label{proof:trajectory-confinement}
By \cref{def:visible-subspace}, the visible subspace $V(\rx_0)$ is the smallest $A-$invariant subspace containing $\{\rx_0\} \cup \operatorname{Im}(B)$:
\begin{align*}
    \rx_0 \in V(\rx_0), \quad \operatorname{Im}(B) \subseteq V(\rx_0), \quad AV(\rx_0) \subseteq V(\rx_0).
\end{align*}
Since $V(\rx_0)$ is $A-$invariant, it is also invariant under $A^k$ for any $k \in \mathbb{N}_0$, hence invariant under the matrix exponential $e^{At}=\sum_{k=0}^{\infty}\frac{t^k}{k!}A^k$, i.e., $e^{At}V(\rx_0)\subseteq V(\rx_0)$ for all $t \geq 0$. For any $t \in [0,T]$, the system state is given by $\varphi(t\spbar A, B, \exptuple)=e^{At}\rx_0+\int_0^te^{A(t-s)}B\ru(s) \; ds$. The autonomous response $e^{At}\rx_0$ lies in $V(\rx_0)$ since $\rx_0 \in V(\rx_0)$ and $e^{At}V(\rx_0)\subseteq V(\rx_0)$. For the forced response $\int_0^te^{A(t-s)}B\ru(s) \; ds$, for any $s \in [0,t] \subseteq [0,T]$, the vector $B\ru(s)$ lies in $\operatorname{Im}(B)$, and thus $B\ru(s)\in V(\rx_0)$. Because $V(\rx_0)$ is invariant under $e^{A(t-s)}$, the integrand $e^{A(t-s)}B\ru(s)$ remains entirely within $V(\rx_0)$ for all $s$. Furthermore, since $V(\rx_0)$ is a finite-dimensional closed subspace, the definite integral of this curve also evaluates to a vector inside $V(\rx_0)$. Since both the autonomous and forced parts of the system response lie within $V(\rx_0)$, their sum must also reside in $V(\rx_0)$. Thus, $\varphi(t\spbar A, B, \exptuple)\in V(\rx_0)$ for all $ t \in [0,T]$.
\end{proof}

\paragraph{Proof (\Cref{lemma:block-form-in-VW}): Block upper-triangular form.}
Let $k \coloneqq \operatorname{dim}(V(\rx_0))$ and denote our visible subspace as $V \coloneqq V(\rx_0)$. By the definition of the visible subspace, we know that $\rx_0 \in V$ and $\mathrm{Im}(B) \subseteq V$. Since the columns of $P \in \mathbb{R}^{n \times k}$ form a basis for $V$, there exist unique coordinate representations $\rx_{0,V} \in \mathbb{R}^k$ and $B_V \in \mathbb{R}^{k \times m}$ such that $\rx_0 = P\rx_{0,V}$ and $B = PB_V$. Applying the inverse coordinate transformation $T^{-1}$ (where $T=[P \;\; Q]$), the components along the complementary subspace $W$ vanish entirely. This gives us our transformed initial state and input matrices:
\begin{align*}
    T^{-1}\rx_0 = \begin{bmatrix} \rx_{0,V} \\ 0 \end{bmatrix}, \quad T^{-1}B = \begin{bmatrix} B_V \\ 0 \end{bmatrix}
\end{align*}
Next, we evaluate the state transition matrix in these new coordinates, defined as $\tilde{A} \coloneqq T^{-1}AT$. We can partition $\tilde{A}$ into four blocks: $\tilde{A} = \begin{bmatrix} A_V & A_* \\ A_{21} & A_W \end{bmatrix}$.
Because $V$ is an $A$-invariant subspace ($AV \subseteq V$), applying $A$ to any vector in $V$ yields another vector strictly in $V$. The first $k$ columns of $A T$ correspond to $AP$, which must lie entirely in $\mathrm{Im}(P) = V$. Therefore, these columns have no projection onto the $W$ subspace. This forces the lower-left block to be zero ($A_{21} = 0$), yielding the block upper-triangular form: $T^{-1}AT = \begin{bmatrix} A_V & A_* \\ 0 & A_W \end{bmatrix}$. Finally, we apply \Cref{lemma:orbit-confinement} (Orbit Confinement). Because the trajectory $\rx(t)$ never leaves $V$, its coordinate representation in the $T$ basis takes the form $z(t) \coloneqq T^{-1}\rx(t) = [\xi(t)^\top \;\; 0^\top]^\top$. Substituting this into the transformed ODE $\dot{z}(t) = \tilde{A}z(t) + T^{-1}B\ru(t)$ leaves us with the exact reduced dynamics:

\begin{align*}
    \begin{cases} \dot{\xi}(t) = A_V\xi(t) + B_V\ru(t), \quad \xi(0) = \rx_{0,V} \\ 0 = 0 \end{cases}
\end{align*}

Thus, the observed trajectory is driven exclusively by the sub-system $(A_V, B_V, \rx_{0,V})$ and is completely independent of both the cross-coupling term $A_*$ and the complementary dynamics $A_W$.

\paragraph{Proof: Geometry of the Design Limit.}
\begin{proof}
\label{proof:design-limit}
    We first characterize the design limit $[A, B]_\calE$ by showing the following mutual implication:
\begin{align*}
   &\textbf{to show: } \quad \abtilde \in [A, B]_{\mathcal{E}} \iff \bigbra{ \tilde{A}\big|_{V(\rx_0)}=A\big|_{V(\rx_0)} \land \quad \tilde{B}=B},
\end{align*}
with 
\begin{align*}
    \abtilde \in [A, B]_{\mathcal{E}} \iff \quad \forall \ru \in \calU, t \in [0,T]: \traj{t}{A, B, \rx_0, \ru)}=\traj{t}{\tilde{A}, \tilde{B}, \rx_0, \ru}.
\end{align*}
$\textbf{1. Necessity }\textbf{("}\mathbf{\implies} \textbf{"):}$ Assume that $\abtilde \in [A, B]_{\mathcal{E}} $. By definition of $ [A, B]_{\mathcal{E}}$, the trajectories agree for every admissible input $\ru \in \calU$. In particular, since the zero input is admissible via \cref{assum:input-class-richness}, we may choose $\ru(\cdot)\equiv \mathbf{0}_m$ and obtain equality of the autonomous responses:
\begin{align*}
   \forall t \in [0,T]: \;  e^{At}\rx_0&= e^{\tilde{A}t}\rx_0 .
\end{align*}
Since both sides are real-analytic in $t$, we get
\begin{align}
    \notag
    \forall k \in \mathbb{N}_0: \rightbarwrap{\frac{d^k}{dt^k}e^{At}\rx_0}{t=0}&=\rightbarwrap{\frac{d^k}{dt^k}e^{\tilde{A} t}\rx_0}{t=0} \iff\\
    \notag
    \forall k \in \mathbb{N}_0: \rightbarwrap{A^ke^{At}\rx_0}{t=0}&= \rightbarwrap{\tilde{A}^ke^{\tilde{A} t}\rx_0}{t=0} \iff \\
    \label{eq:homogeneous-state-Markov-parameter-equality}
    &\hspace{-2.4cm}\boxed{\forall k \in \mathbb{N}_0: A^k\rx_0= \tilde{A}^k\rx_0} \;.
\end{align}
Since the equality of autonomous responses is necessary for equivalence, so is equality between forced responses:
\begin{align*}
    \forall t \in [0,T], \; \ru(\cdot) \in \calU: \; \traj{t}{A, B, \rx_0, \ru)}&=\traj{t}{\tilde{A}, \tilde{B}, \rx_0, \ru} \iff  \\
     e^{At}\rx_0 +\int_0^te^{A(t-s)}B\ru(s) \spone ds &= e^{\tilde{A}t}\rx_0 +\int_0^te^{\tilde{A}(t-s)}\tilde{B}\ru(s) \spone ds \; ; \overset{e^{At}\rx_0 = e^{\tilde{A}t}\rx_0}{\iff}  \\
     \int_0^te^{A(t-s)}B\ru(s) \spone ds &= \int_0^te^{\tilde{A}(t-s)}\tilde{B}\ru(s) \spone ds .
\end{align*}
We define the kernels $K(\tau)\coloneqq e^{A\tau}B\;$, $\tilde{K}(\tau)\coloneqq e^{\tilde{A} \tau} \tilde{B}$, with the help of which we rewrite the equality above as:
\begin{align*}
    \forall t \in [0,T], \; \ru(\cdot) \in \calU:& \quad \int_0^t K(t-s)\ru(s) \spone ds= \int_0^t \tilde{K}(t-s)\ru(s) \spone  ds \\
    \forall t \in [0,T], \; \ru(\cdot) \in \calU:\quad& (K*u)(t) = (\tilde{K}*u)(t).
\end{align*}
Via kernel uniqueness \citep{folland1999real}, we get $K\equiv\tilde{K}$. Thus:
\begin{align*}
    \forall t \in [0,T]: \; K(t)&=\tilde{K}(t) \iff \\
     \forall t \in [0,T]: e^{At}B &= e^{\tilde{A} t} \tilde{B} \iff \\
    \notag
    \forall k \in \mathbb{N}_0: \; \rightbarwrap{\frac{d^k}{dt^k} e^{At}B}{t=0} &= \rightbarwrap{\frac{d^k}{dt^k} e^{\tilde{A} t}\tilde{B}}{t=0} \iff \\
    \notag
    \forall k \in \mathbb{N}_0: \; \rightbarwrap{A^{k}e^{At}B}{t=0} &= \rightbarwrap{\tilde{A}^{k}e^{\tilde{A} t}\tilde{B}}{t=0} \iff \\
    &\hspace{-2.4cm}\boxed{\forall k \in \mathbb{N}_0: \; A^kB=\tilde{A}^k\tilde{B}}\;, 
\end{align*}
i.e., the state-Markov parameters agree for all $k \in \mathbb{N}_0$. Especially, we obtain $B = \tilde{B}$ by setting $k=0$.
Remember that the visible subspace is defined as
\begin{align*}
    V(\rx_0):=\operatorname{span}\bigbra{A^k\rx_0, A^kb_j:\; k \in [0, n-1], j \in [m]}.
\end{align*}
Then, the equations $\forall k \in \mathbb{N}_0: A^k\rx_0= \tilde{A}^k\rx_0$ and $\forall k \in \mathbb{N}_0: \; A^kB=\tilde{A}^k\tilde{B}$ imply the equality of visible subspaces
\begin{align*}
    \tilde{V}(\rx_0):=\operatorname{span}\bigbra{\tilde{A}^k\rx_0, \tilde{A}^k\tilde{b}_j:\; k \in [0, n-1], j \in [m]}=V(\rx_0),
\end{align*}
as well as the equality of restrictions: $A\big|_{V(\rx_0)}=\tilde{A}|_{V(\rx_0)}$, and we have already shown that $\tilde{B}=B$ is necessary for experiment-consistency between the systems.

$\textbf{2. Sufficiency }\textbf{("}\mathbf{\impliedby} \textbf{"):}$ Let $\abtilde$ be any block matrix in $\brn{n \times (n+m)}$ such that $\left.\tilde{A}\right|_{V(\rx_0)}=\left. A \right|_{V(\rx_0)}$ and $\tilde{B}=B$. Since $\rightbarwrap{\tilde{A}}{{V(\rx_0)}}=\rightbarwrap{A}{{V(\rx_0)}}$, and ${V(\rx_0)}$ is $A-$invariant \textbf{(*)}, it is also $\tilde{A}-$invariant:
\begin{align*}
    \forall v \in V: \; \tilde{A} v=Av \overset{\textbf{(*)}}{\in } V \iff \tilde{A} V \subseteq V\overset{\text{def.}}{\iff} V \; \text{is} \; \text{$\tilde{A}-$invariant}.
\end{align*}
\textit{All powers agree on $V(\rx_0)$}. Next, we argue that all powers agree on $V(\rx_0)$:
\begin{align*}
    \forall k \in \mathbb{N}_0: \;\rightbarwrap{\tilde{A}^k}{V(\rx_0)}=\rightbarwrap{A^k}{V(\rx_0)}.
\end{align*}
This follows by induction: for $k=0$ it is trivial. If $\tilde{A}^k v=A^k v$, then $A^k v\in V$ by $A$-invariance, and thus
\begin{align*}
    \tilde{A}^{k+1}v=\tilde{A}(\tilde{A}^k v)=\tilde{A}(A^k v)=A(A^k v)=A^{k+1}v,
\end{align*}
where we used $\left.\tilde{A}\right|_V=\left.A\right|_V$ in the third equality. \\
\textit{Exponentials agree on $V(\rx_0)$}. Using the power series for the matrix exponential and $\forall k \in \mathbb{N}_0:\tilde{A}|_{V(\rx_0)}=A|_{V(\rx_0)} \; (*)$, we obtain:
\begin{align*}
    \rightbarwrap{e^{\tilde{A} t}}{V}=\rightbarwrap{\underset{k=0}{\overset{\infty}{\sum}} \frac{\tilde{A}^kt^k}{k!}}{V}=\underset{k=0}{\overset{\infty}{\sum}} \frac{t^k}{k!} \rightbarwrap{\tilde{A}^k}{V}\overset{\text{(*)}}{=}{\overset{\infty}{\sum}} \frac{t^k}{k!} \rightbarwrap{A^k}{V}=\rightbarwrap{\underset{k=0}{\overset{\infty}{\sum}} \frac{A^kt^k}{k!}}{V}=\rightbarwrap{e^{A t}}{V}.
\end{align*}
In particular, since $\rx_0\in V(\rx_0)$ by construction, we get $e^{\tilde{A} t}\rx_0=e^{A t}\rx_0$. \\
\textit{Forced responses coincide.} Using the same argument, we can show that $e^{\tilde{A} (t-s)}\tilde{B} =e^{A (t-s)}B$. Thus, for any piecewise-continuous input $\ru(\cdot) \in \calU$, the forced responses coincide:
\begin{align*}
    \int_0^t e^{\tilde{A} (t-s)}\tilde{B}\ru(s) \spone ds = \int_0^t e^{A (t-s)}B\ru(s) \spone ds.
\end{align*}
Overall, we get for all $t \in [0,T]$ and $\ru(\cdot) \in \calU:$
\begin{align*}
    e^{\tilde{A} t}\rx_0+\int_0^t e^{\tilde{A} (t-s)}\tilde{B}\ru(s) \spone ds &= e^{A t}\rx_0 +\int_0^t e^{A (t-s)}B\ru(s) \spone ds \\
    \iff \varphi(t \spbar \tilde{A}, \tilde{B}, \rx_0, \ru) &= \varphi(t \spbar A, B, \rx_0, \ru).
\end{align*}
This establishes that $[A, B]_\calE$ contains exactly those systems matching $\abpar$ on the visible subspace.
\end{proof}

\paragraph{Proof (\Cref{eq:pbh-test}): Frobenius distance to PBH failure for fixed $\rx_0$.}
\begin{proof}
\label{proof:Frobenius-distance-to-PBH-failure}
    Let $\rx_0\neq 0$ and let $Q\in\mathbb R^{n\times(n-1)}$ have orthonormal columns spanning $\rx_0^\perp$. By the PBH characterization applied to the augmented pair $(A,[\rx_0\ B])$, loss of $\rx_0$-identifiability under perturbations $(\Delta A,\Delta B)$ is equivalent to existence of $\lambda\in\mathbb C$ and $w\neq 0$ such that
    \begin{align*}
        w^\top(\lambda I-(A+\Delta A))=0,\quad w^\top\rx_0=0,\quad w^\top(B+\Delta B)=0.
    \end{align*}
    The constraint $w^\top\rx_0=0$ implies $w=Qz$ for some $z\neq 0$, hence the above is equivalent to
    \begin{align*}
        z^\top\,Q^\top[\lambda I-(A+\Delta A),\ B+\Delta B]=0,
    \end{align*}
    i.e.\ $\rank(Q^\top[\lambda I-(A+\Delta A),\ B+\Delta B])<n-1$ for some $\lambda$. Therefore the distance to PBH failure equals the minimum Frobenius perturbation needed to make $Q^\top[\lambda I-A,\ B]$ lose row rank for some $\lambda$. By Eckart--Young--Mirsky, the minimum Frobenius perturbation that makes a matrix rank-deficient equals its smallest singular value, yielding
    \begin{align*}
        d_{\mathrm{PBH}}(A,B\mid \rx_0)=\inf_{\lambda\in\mathbb C}\ \sigma_{\min}\!\big(Q^\top[\lambda I-A,\ B]\big).
    \end{align*}
    For $\rx_0=0$, one may take $Q=I_n$, recovering the classical distance-to-uncontrollability formula.
\end{proof}

\paragraph{Proof (\Cref{lemma:pe_transfer}): PE transfer from input to state.}
\begin{proof}
    \label{proof:pe-transfer-u2x}
    Fix $V(\rx_0)=\mathcal{K}_n(A, \Kcore)$ with $\dim(V(\rx_0))=k$ and set $A_V \coloneqq A\big|_{V(\rx_0)} \in \brn{k \times k}$, $B_V=B\big|_{V(\rx_0)}\in \brn{k \times m}$. Define the projected joint regressor as $z(t):=\begin{bmatrix}
        \xi(t) \\ \ru(t)
    \end{bmatrix}$, where $\xi(t)=P_V^\top\rx(t) \in \brn{k}$ and $P_V\in \brn{n \times k}$ is an orthonormal basis spanning $V(\rx_0)$. We want to show that, if $\ru : [0,T]\to \brn{m}$ is persistently exciting of order (at least) $k+1$, the Gramian $\mathcal{G}_{[0,T]}=\int_0^Tz(t)z(t)^\top dt$ is positive definite ($\gram \succ 0)$

    Let $w=\begin{bmatrix}\nu\\ \eta\end{bmatrix}\in\mathbb{R}^{k+m}$ (with $\nu \in \brn{k}$ and $\eta \in \brn{m}$). Then:
    \begin{align}
    \label{eq:annihilator}
        w^\top \gram w= \underset{0}{\overset{T}{\int}} w^\top z(t) z(t)^\top w \;\; dt =\underset{0}{\overset{T}{\int}}\|w^\top z(t)\|^2 \;\; dt=\underset{0}{\overset{T}{\int}}\|\nu^\top \xi(t)+\eta^\top \ru(t)\|^2 \;\; dt \geq 0.
    \end{align}
    Thus, $\gram \succ 0$ is equivalent to: 
    \begin{align}
    \label{eq:annihilator-condition}
        \lnot \; \exists \brn{k+m} \ni w\neq \mathbf{0}: \; \forall t \in [0,T], \;\; \nu^\top \xi(t)+\eta^\top \ru(t)=0,
    \end{align}
    i.e., there is no nonzero $w$ for which $\nu^\top \xi(t)+\eta^\top \ru(t)=0$ for all $t \in [0,T]$.

    Assume for contradiction that such a $w=\begin{bmatrix}\nu\\ \eta\end{bmatrix}$ exists. Since $\ru$ is PE of order $k+1$ by assumption, it is $k$ times differentiable, and so is $\xi(t)$.  Differentiating $\nu^\top \xi(t)+\eta^\top \ru(t)=0$ $i$ times for $i \in \{0, \dots, k\}$, we get
    \begin{align}
    \label{eq:differentiated-annihilator}
        \nu^\top \xi^{(i)}(t)+\eta^\top \ru^{(i)}(t)=0 \quad \forall t \in [0,T], \; i \in \{0, \dots, k\}.
    \end{align}
    For $i \geq 1$, by induction from $\dot{\xi}(t)=A_V\xi(t)+B_V\ru(t)$, we have
    \begin{align}
    \label{eq:induction}
        \xi^{(i)}(t)&=A_V^{i}\xi(t)+\underset{j=0}{\overset{i-1}{\sum}} A_V^{i-1-j}B_V\ru^{(j)}(t) 
    \end{align}
    for all $t \in [0,T]$.
    
    Let $p(\lambda)=\lambda^k+c_{k-1}\lambda^{k-1}+\cdots+c_0$ be the characteristic polynomial of $A_V$. By Cayley-Hamilton,
    \begin{align}
    \label{eq:CHT}
    A_V^k+c_{k-1}A_V^{k-1}+\cdots+c_0 I_k=0.
    \end{align}

    Next, we form the linear combination of \Cref{eq:differentiated-annihilator} with the coefficients $c_i$ (with $c_k:=1$):
    \begin{align}
    \label{eq:linearly-combined-differentiated-annihilator}
        \underset{i=0}{\overset{k}{\sum}}c_i\nu^\top \xi^{(i)}(t)+c_i\eta^\top \ru^{(i)}(t)=0 \quad \forall t \in [0,T], \; i \in \{0, \dots, k\}.
    \end{align}
    Substituting \Cref{eq:induction} for $\xi^{(i)}(t)$, we obtain for all $t \in [0,T]$ and all $i \in \{0, \dots, k\}$,
    \begin{align}
        \notag
        \underset{i=0}{\overset{k}{\sum}}c_i\nu^\top \left( A_V^{i}\xi(t)+\underset{j=0}{\overset{i-1}{\sum}} A_V^{i-1-j}B_V\ru^{(j)}(t)  \right)+c_i\eta^\top \ru^{(i)}(t)=0  \longleftrightarrow \\
        \underset{i=0}{\overset{k}{\sum}}c_i \nu^\top  A_V^{i}\xi(t)+ \underset{i=0}{\overset{k}{\sum}}\underset{j=0}{\overset{i-1}{\sum}}c_i A_V^{i-1-j}B_V\ru^{(j)}(t)+ \underset{i=0}{\overset{k}{\sum}}c_i\eta^\top \ru^{(i)}(t)=0  
    \end{align}
    We now rearrange the first term as
    \begin{align}
    \sum_{i=0}^{k} c_i \nu^\top A_V^{i}\xi(t)
    = \nu^\top\Big(\sum_{i=0}^{k} c_i A_V^{i}\Big)\xi(t)
    = \nu^\top p(A_V)\,\xi(t)=0,
    \end{align}
    where the last equality follows from \Cref{eq:CHT}.

    Hence the state $\xi(t)$ is eliminated and we obtain, for all $t\in[0,T]$,
    \begin{align}
    \label{eq:pure-u}
    \sum_{i=0}^{k}\sum_{j=0}^{i-1} c_i \nu^\top A_V^{\,i-1-j}B_V\,\ru^{(j)}(t)
    \;+\;\sum_{i=0}^{k} c_i \eta^\top \ru^{(i)}(t)=0 .
    \end{align}
    With re-indexing, we isolate the $\ru^{(i)}(t)$ terms:
    \begin{align}
    \label{eq:separated-u}
    \notag
    \sum_{j=0}^{k-1}\sum_{i=j+1}^{k} c_i \nu^\top A_V^{\,i-1-j}B_V\,\ru^{(j)}(t)
    \;+\;\sum_{j=0}^{k} c_j \eta^\top \ru^{(j)}(t)=0 \longleftrightarrow \\
    \sum_{j=0}^{k-1}\underbrace{\left(c_j \eta^\top+\sum_{i=j+1}^{k} c_i \nu^\top A_V^{\,i-1-j}B_V\right)}_{:=\gamma_j^\top}\ru^{(j)}(t)+\underbrace{c_k \eta^\top}_{:=\gamma_k^\top} \ru^{(k)}(t) = 0. 
    \end{align}
    Thus, \Cref{eq:differentiated-annihilator} can be compactly expressed as
    \begin{align}
        \label{eq:compact-differentiated-annihilator}
        \underset{j=0}{\overset{k}{\sum}} \gamma_j^\top \ru^{(j)}(t)=0 \quad \forall t \in [0,T]
    \end{align}
    where 
    \begin{align}
        \label{eq:gamma-term}
        \gamma_j^\top =\begin{cases}
            c_j \eta^\top+\sum_{i=j+1}^{k} c_i \nu^\top A_V^{\,i-1-j}B_V & \text{if } \; j \in \{0, \dots, k-1\} \\
            c_k\eta^\top =\eta^\top & \text{if } \; j=k.
        \end{cases},
    \end{align}

    By \cref{def:PEness}, $\ru$ being PE of order $k+1$ means that the only constant coefficient vector $\Gamma^\top=\begin{bmatrix}\gamma_0^\top & \cdots & \gamma_k^\top\end{bmatrix}$
    for which \Cref{eq:pure-u} holds is $\Gamma=0$, enforcing $\gamma_i^\top=\mathbf{0}$ for all $i \in \{0, \dots, k\}$. From $\gamma_k^\top=\eta^\top \overset{!}{=}\mathbf{0}$, we immediately obtain $\eta^\top=\mathbf{0}$.    

    Evaluating \Cref{eq:gamma-term} at $j=k-1$ gives
    \begin{align*}
        \gamma_{k-1}^\top=c_{k-1} \mathbf{0}+\underset{i=(k-1)+1}{\overset{k}{\sum}} c_i\nu^\top A_V^{i-1-(k-1)}B_V=c_k\nu^\top A_V^{0}B_V=\nu^\top B_V \overset{!}{=} \mathbf{0}.
    \end{align*}
    Next, evaluating for $j=k-2$:
    \begin{align*}
        \gamma_{k-2}^\top&=c_{k-2} \mathbf{0}+\underset{i=(k-2)+1}{\overset{k}{\sum}} c_i\nu^\top A_V^{i-1-(k-2)}B_V \\
        & = c_{k-1}\nu^\top A_V^{0}B_V+c_k\nu^\top A_V B_V \\&
        = c_{k-1}\nu^\top B_V+\nu^\top A_V B_V \\
        &=c_{k-1}\mathbf{0}+\nu^\top A_V B_V \\
        &=\nu^\top A_V B_V \overset{!}{=} \mathbf{0}.
    \end{align*}
    Next, evaluating for $j=k-\ell$ for an arbitrary $\ell \in \{1, \dots, k-1\}$, we obtain
    \begin{align*}
        \gamma_{k-\ell}^\top&= \underset{j=1}{\overset{\ell}{\sum}}c_{k-\ell +j}\nu^\top A_V^{j-1}B_V \\
        &=\begin{bmatrix}
            c_{k-\ell +1} & c_{k-\ell +2} & \dots & c_k
        \end{bmatrix} \begin{bmatrix}
            \nu^\top B_V \\ \nu^\top A_VB_V\\ \vdots \\ \nu^\top A_V^{\ell-1}B_V
        \end{bmatrix} \overset{!}{=}\mathbf{0},
    \end{align*}
    from which we confirm that $\nu^\top A_V^{\ell-1}B_V=\mathbf{0}$. Overall, we get
    \begin{align}
    \label{eq:Markov-Annihilation}
        \nu^\top B_V=\nu^\top A_VB_V=\cdots=\nu^\top A_V^{k-1}B_V=0.
    \end{align}
    Since $\eta=0$, the annihilator \Cref{eq:differentiated-annihilator} reduces to $\nu^\top\xi(t)=0$ for all $t$. Evaluating derivatives at $t=0$ gives $\nu^\top\xi^{(i)}(0)=0$ for $i=0,\dots,k-1$.
    Using \Cref{eq:induction} at $t=0$ (and $\xi(0)=\rx_{0,V}$) yields
    \begin{align}
    0=\nu^\top A_V^i \rx_{0,V}+\sum_{j=0}^{i-1}\nu^\top A_V^{i-1-j}B_V\,\ru^{(j)}(0).
    \end{align}
    All terms in the sum vanish by \Cref{eq:Markov-Annihilation}, hence
    \begin{align}
    \label{eq:x0-Annihilation}
    \nu^\top A_V^i \rx_{0,V}=0,\qquad i=0,1,\dots,k-1.
    \end{align}
    
    Combining \Cref{eq:Markov-Annihilation} and \Cref{eq:x0-Annihilation}, we see that $\nu$ annihilates
    \begin{align}
    \mathrm{span}\Big\{A_V^i \rx_{0,V},\ A_V^i\mathrm{Im}(B_V): i=0,\dots,k-1\Big\}.
    \end{align}
    By the \cref{def:visible-subspace} of $V(\rx_0)$ and $\dim(V(\rx_0))=k$, this span equals $\mathbb{R}^k$
    (equivalently, $(A_V,[\rx_{0,V}\ B_V])$ is controllable on $\mathbb{R}^k$), so $\nu=0$.
    Together with $\eta=0$, this contradicts $w\neq 0$.
    Therefore no nonzero $w$ satisfies \Cref{eq:differentiated-annihilator}, and hence $\mathcal{G}_{[0,T]}\succ 0$.
\end{proof}

\paragraph{Proof (\Cref{theorem:bridge}): The Bridge.}
\begin{proof}
    \label{proof:bridge-theorem}
    \textbf{1. Forward Inclusion ($[A, B]_{\mathcal{E}} \subseteq [A, B]_{(\rx_0, \ru)}$):} By definition.

    \textbf{2. Reverse Inclusion ($[A, B]_{(\rx_0, \ru)} \subseteq [A, B]_{\mathcal{E}}$):} Let $\abtilde \in [A, B]_{(\rx_0, \ru)}$ and shorten system responses as $\varphi:=\varphi([0,T]\spbar A, B, \rx_0, \ru)$ and $\tilde{\varphi}:=\tilde{\varphi}([0,T]\spbar \tilde A, \tilde B, \rx_0, \ru)$. To show:
    \begin{align*}
        &\abtilde \in [A, B]_{(\rx_0, \ru)} \implies \abtilde \in [A, B]_{\calE} \\
        & \longleftrightarrow \tilde \varphi = \varphi \implies \tilde A\big|_{V(\rx_0)}=A\big|_{V(\rx_0)} \;\; \land \;\; \tilde B=B.
    \end{align*}
    Since $\tilde \varphi = \varphi$, the time derivatives on $[0,T]$ must match as well:
    \begin{align*}
        \forall t \in [0,T]:\tilde A \rx(t)+\tilde B\ru(t)=A\rx(t)+B\ru(t),
    \end{align*}
    leading to the error equation
    \begin{align*}
        \forall t \in [0,T]: \underbrace{(\tilde A-A)}_{:= \Delta A}\rx(t)+\underbrace{(\tilde B-B)}_{\Delta B}\ru(t)=0.
    \end{align*}
    The implication we're trying to show becomes:
    \begin{align*}
        &\left(\forall t \in [0,T]:
        \Delta A \rx(t)+\Delta B\ru(t)=0 \right) \implies \\
        & \qquad \left( \tilde A\big|_{V(\rx_0)}=A\big|_{V(\rx_0)} \;\; \land \;\; \tilde B=B \right).
    \end{align*}
    Using $\rx(t)=P_V\xi(t)$, we get
    \begin{align*}
        &\left(\forall t \in [0,T]:
        [\Delta AP_V \quad \Delta B][ \xi(t)^\top\ru(t)^\top]^\top=0 \right) \implies \\
        & \qquad \left( \tilde A\big|_{V(\rx_0)}=A\big|_{V(\rx_0)} \;\; \land \;\; \tilde B=B \right).
    \end{align*}
    Let's vector-multiply the left-hand side with $[ \xi(t)^\top\quad \ru(t)^\top]$ and integrate on $[0,T]$:
    \begin{align*}
        \forall t \in [0,T]: \quad
        [\Delta AP_V \quad \Delta B]\underbrace{\underset{0}{\overset{T}{\int}} \begin{bmatrix} \xi(t) \\ \ru(t) \end{bmatrix}
        [ \xi(t)^\top\ru(t)^\top] \;\: dt}_{=\mathcal{G}_{[0,T]}}=0.
    \end{align*}
    The implication we're trying to show becomes:
    \begin{align*}
        &\big(\forall t \in [0,T]:[\Delta AP_V \quad \Delta B]\mathcal{G}_{[0,T]}=0 \big) \implies \\
        &\qquad \left( \tilde A\big|_{V(\rx_0)}=A\big|_{V(\rx_0)} \;\; \land \;\; \tilde B=B \right).
    \end{align*}
    Since the trajectory is \emph{informative} (Def. \ref{def:informative_trajectory}), the Gramian $\mathcal{G}_{[0,T]}$ is positive definite and invertible. Multiplying by $\mathcal{G}_{[0,T]}^{-1}$ from the right yields:
    \begin{align*}
        [\Delta AP_V \quad \Delta B] = 0 \implies \Delta AP_V = 0 \;\; \land \;\; \Delta B = 0.
    \end{align*}
    The condition $\Delta A P_V = 0$ is equivalent to $(\tilde{A}-A)|_{V(\rx_0)} = 0$, or $\tilde{A}|_{V(\rx_0)} = A|_{V(\rx_0)}$. Combined with $\tilde{B}=B$, it follows directly from the geometric characterization of the design limit (\cref{proof:design-limit}) that $\abtilde \in [A, B]_\calE$.
\end{proof}

\paragraph{Proof (\Cref{theorem:identifiability}): Experiment-Conditional Identifiability.}
\begin{proof}
\label{proof:identifiability-1}
\textbf{(a)} Because the realized trajectory is informative, \cref{theorem:bridge} guarantees that $[A, B]_\re=[A, B]_\calE$. By the geometric characterization of the design limit proven in \cref{proof:design-limit}, $[A, B]_\calE$ consists exactly of those systems where $\tilde{A}\big|_{V(\rx_0)}=A\big|_{V(\rx_0)}$ and $\tilde{B}=B$. Thus, the \emph{restricted} dynamics are uniquely determined by the single experiment $\re$. \textbf{(b)} Follows directly from \textbf{(a)}. \textbf{(c)} Full identifiability requires $[A, B]_\re=\{(A, B)\}$. From the parametrization in \textbf{(b)}, the experiment-consistent set must have zero degrees of freedom, i.e., $n(n-k)=0$. This is equivalent to $k=\operatorname{dim}(V(\rx_0))=n$, meaning $V(\rx_0)=\brn{n}$. 
\end{proof}

\paragraph{Proof: Invariants under ZOH Sampling}
\begin{proof}
\label{proof:ZOH-invariance}

\textbf{1. Exact ZOH dynamics.}
By variation of constants, for $u(t)\equiv u[j]$ on $[j\Delta t,(j{+}1)\Delta t)$,
\begin{align*}
    \rx((j+1)\Delta t)=e^{A\Delta t}\rx(j\Delta t)+\Big(\int_0^{\Delta t}e^{As}\,ds\Big)B\ru[j],
\end{align*}
so $\rx[j{+}1]=A_d \rx[j]+B_d \ru[j]$ with $A_d=e^{A\Delta t}$ and $B_d=\int_0^{\Delta t}e^{As}B\,ds$
(e.g., \citep{chen2012optimal}).
\textit{Two basic ingredients:}
Cayley–Hamilton implies $A_d^j=e^{jA\Delta t}$ is a polynomial in $A$; hence $\mathcal{K}_n(A_d,w)\subseteq \mathcal{K}_n(A,w)\quad\text{for any }w$.
Also $B_d=g(A)B$ with $g(z)=(e^{z\Delta t}-1)/z$ (and $g(0)=\Delta t$).

\textbf{2. Reachable space is preserved.}
From $\operatorname{span}\{A_d^kB_d\}\subseteq \operatorname{span}\{A^kB_d\}$ with $w=B_d$,
$\operatorname{span}\{A_d^kB_d\}\subseteq \operatorname{span}\{A^kB_d\}$.
If $\Delta t$ is nonpathological on the \emph{reachable spectrum} $\sigma_r(A,B)$,
then $g(\lambda)\neq 0$ for all $\lambda\in\sigma_r(A,B)$, so $g(A)$ is invertible on
$\mathcal{R}_0^{ct}:=\operatorname{span}\{A^kB\}$ and
\begin{align*}
    \operatorname{span}\{A^kB_d\} =\operatorname{span}\{A^k g(A)B\} =\operatorname{span}\{A^kB\} =\mathcal{R}_0^{ ct},
\end{align*}
giving $\mathcal{R}_0^{dt}\subseteq \mathcal{R}_0^{ ct}$.
For the reverse inclusion, restrict attention to $\mathcal{R}_0^{ ct}$.
On this subspace, the eigenvalues of $A$ are precisely $\sigma_r(A,B)$, which
remain distinct under $z\mapsto e^{\Delta t z}$ by the non-pathological sampling assumption.
Since $f(z)=e^{\Delta t z}$ is analytic with $f'(\lambda)=\Delta t e^{\lambda\Delta t}\neq 0$,
the Jordan block sizes for $A$ and $A_d=f(A)$ on $\mathcal{R}_0^{ ct}$ coincide
(\cite[§3]{higham2008functions}). Consequently, for every polynomial $p$ there exists a polynomial $q$
such that $p(A)\big|_{\mathcal{R}_0^{ ct}} \;=\; q(A_d)\big|_{\mathcal{R}_0^{ ct}}$.
Applying both sides to $B_d\in\mathcal{R}_0^{ ct}$ yields
$\operatorname{span}\{A^kB_d\}\subseteq \operatorname{span}\{A_d^kB_d\}$, hence
$\mathcal{R}_0^{ ct}\subseteq \mathcal{R}_0^{ dt}$.
Combining the two inclusions gives $\mathcal{R}_0^{ dt}=\mathcal{R}_0^{ ct}$
(see also \citep{362892,kreisselmeier2002sampling,braslavsky1995frequency} for closely related criteria).

\textbf{3. Visible subspace is preserved.}
From $\operatorname{span}\{A_d^kB_d\}\subseteq \operatorname{span}\{A^kB_d\}$ with $w\in\{x_0\}\cup\operatorname{col}(B)$ and $B_d=g(A)B$ we get the easy direction
$V_{ dt}(x_0)\subseteq V_{ ct}(x_0)$.

For the reverse inclusion on the \emph{autonomous} part $\mathcal{K}_n(A,x_0)$,
assume $\lambda\mapsto e^{\lambda\Delta t}$ is injective on all of $\sigma(A)$.
By the same analytic/Jordan-block argument as above (now on the full space),
for each polynomial $p$ there is a $q$ with $p(A)x_0=q(A_d)x_0$, hence
$\mathcal{K}_n(A,x_0)\subseteq \mathcal{K}_n(A_d,x_0)$.
Together with (ii), which gives $\mathcal{K}_n(A,B)=\mathcal{K}_n(A_d,B_d)$,
we conclude $V_{ ct}(x_0)\subseteq V_{ dt}(x_0)$, and thus equality.

\textbf{4. Informativeness under ZOH.} To bridge the continuous-time informativeness to ZOH inputs, we rely on the exact discrete-time dynamics on the visible subspace $V(\rx_0)$:
\begin{align*}
    \xi[j+1]=A_{d, V}\xi [j]+B_{d, V}\ru[j],
\end{align*}
where $A_{d,V}=e^{A_V\Delta t}$ and $B_{d,V}= \int_0^{\Delta t}e^{A_Vs}B_V \; ds$. By Step 3, if $\Delta t$ is non-pathological, $(A_{d,V}, [\rx_0 \;\; B_{d,V}])$ is a controllable pair on $\brn{n}$ (where $k=\operatorname{dim}(V(\rx_0))$). 

For discrete-time systems, an input sequence $\ru[0], \dots, \ru[N-1]$ is persistently exciting of order $r$ if its Hankel matrix $\mathcal{H}_r(\ru)$ has full row rank:
\begin{align*}
    \operatorname{rank} 
    \begin{bmatrix}
    \ru[0] & \ru[1] & \dots & \ru[N-r] \\ 
    \ru[1] & \ru[2] & \dots & \ru[N-r+1] \\ 
    \vdots & \vdots & \ddots & \vdots \\ 
    \ru[r-1] & \ru[r] & \dots & \ru[N-1] 
    \end{bmatrix} = rm.
\end{align*}
Let$z[j]:=[\xi[j]^\top , \ru[j]^\top]^\top$. We want to show that if $\ru$ is PE of order $k+1$, the discrete Gramian $\mathcal{G}_d=\sum_j^{N-1}z[j]z[j]^\top$ is positive definite.

Assume for contradiction that $\mathcal{G}_d$ is singular. Then, there exists a nonzero $w=[\nu^\top, \eta^\top]^\top$ such that $w^\top z[j]=0$ for all $j$, meaning $\nu^\top \xi[j]+\eta^\top \ru[j]=0$. Using the discrete-time transition equation $\xi[j+1]=A_{d, V}\xi [j]+B_{d, V}\ru[j]$, we can express future states purely in terms of the initial state and inputs. Substituting $\xi[j]$ into the annihilator and shifting forward in time generates a system of linear equations identical in structure to the continuous-time derivative annihilator in \cref{eq:differentiated-annihilator}. By applying the Cayley-Hamilton theorem to $A_{d,V}$, we can linearly combine $k+1$ consecutive time shifts to eliminate the state $\xi[j]$ entirely. This leaves a linear combination of strictly the inputs $\ru[j], \dots, \ru[j+k]$ equating to zero for all $j$. However, because the input is PE of order $k+1$, the rows of the Hankel matrix are linearly independent, meaning no such non-trivial linear combination can exist. This forces $\nu=0$ and $\eta=0$, contradicting $w\neq 0$. Thus, $\mathcal{G}_d\succ 0$, and the trajectory is informative.

\end{proof}

\section{Additional Experiments}
\label{ssection:additional-experiments}

\subsection{Stress Tests}
\label{ssection:stress-tests}
The experiments in \S\ref{ssection:subsystem-recovery} fix $n=10$, $T=80$, and stratify by visibility $k \in \{5, \dots, 10\}$. We verify that the quantitative finding $\text{REE}_\text{vis}\ll \text{REE}_\text{full}$ is not an artifact of these specific settings. We run two stress tests with DMDc, varying respectively the state dimension at fixed $k=5$ and the sampling step $\Delta t$.

\paragraph{Varying the sampling step (\Cref{fig:stress-tests-deltat}).} We sample 
$50$ baseline continuous-time pairs $(A_c, B_c)$ from the same sparse-truncated Gaussian ensemble used in \S\ref{ssection:subsystem-recovery}, and discretize at $\Delta t \in \{0.05, 0.1, 0.25, 0.5, 1 , 2\}$. We restrict the ensemble to partial visibility cases ($k <n, n=10, m=2$). Noise-free, length$-T=80$ trajectories are simulated under a persistently exciting input and DMDc is applied. The curves show the mean $\pm \text{ std}$ of $\text{REE}_\text{full}$ (red) and $\text{REE}_\text{vis}$ (blue). The visible subsystem error remains near zero, while the full-system error stays bounded away from zero, demonstrating that the structural identifiability gap predicted by Theorem 3.7 is invariant to the sampling rate.

\begin{figure}[H]
    \centering
    \includegraphics[width=0.475\linewidth]{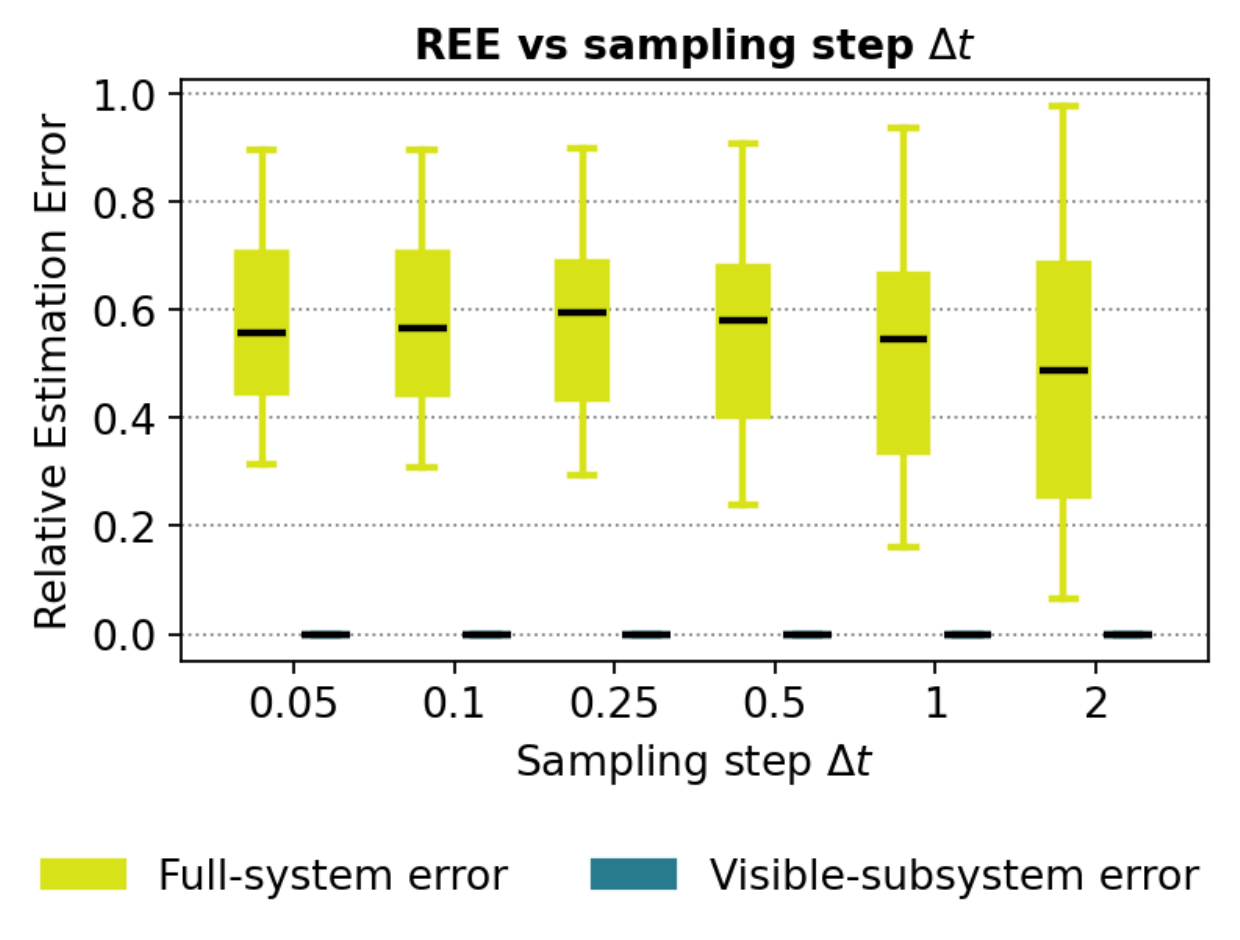}
    \caption{\textbf{Relative estimation error as a function of the sampling step $\Delta t$}. }
    \label{fig:stress-tests-deltat}
\end{figure}

\paragraph{Varying the state dimension (\Cref{fig:stress-tests--ndim}).} For each $n \in \{5, 10, 15, \dots, 100\}$, we draw $24$ systems, simulate noise-free, length$-T=80$ trajectories under a PE input, and run DMDc. Curves show the mean (markers) $\pm \text{ std}$ (whiskers) for $\text{REE}_\text{full}$ (red) and $\text{REE}_\text{vis}$ (blue). The green curve plots the fraction of invisible dimensions. The visible subspace error stays near zero while the full-system error tracks this fraction, as predicted by \cref{theorem:identifiability}.

\begin{figure}[H]
    \centering
    \includegraphics[width=0.75\linewidth]{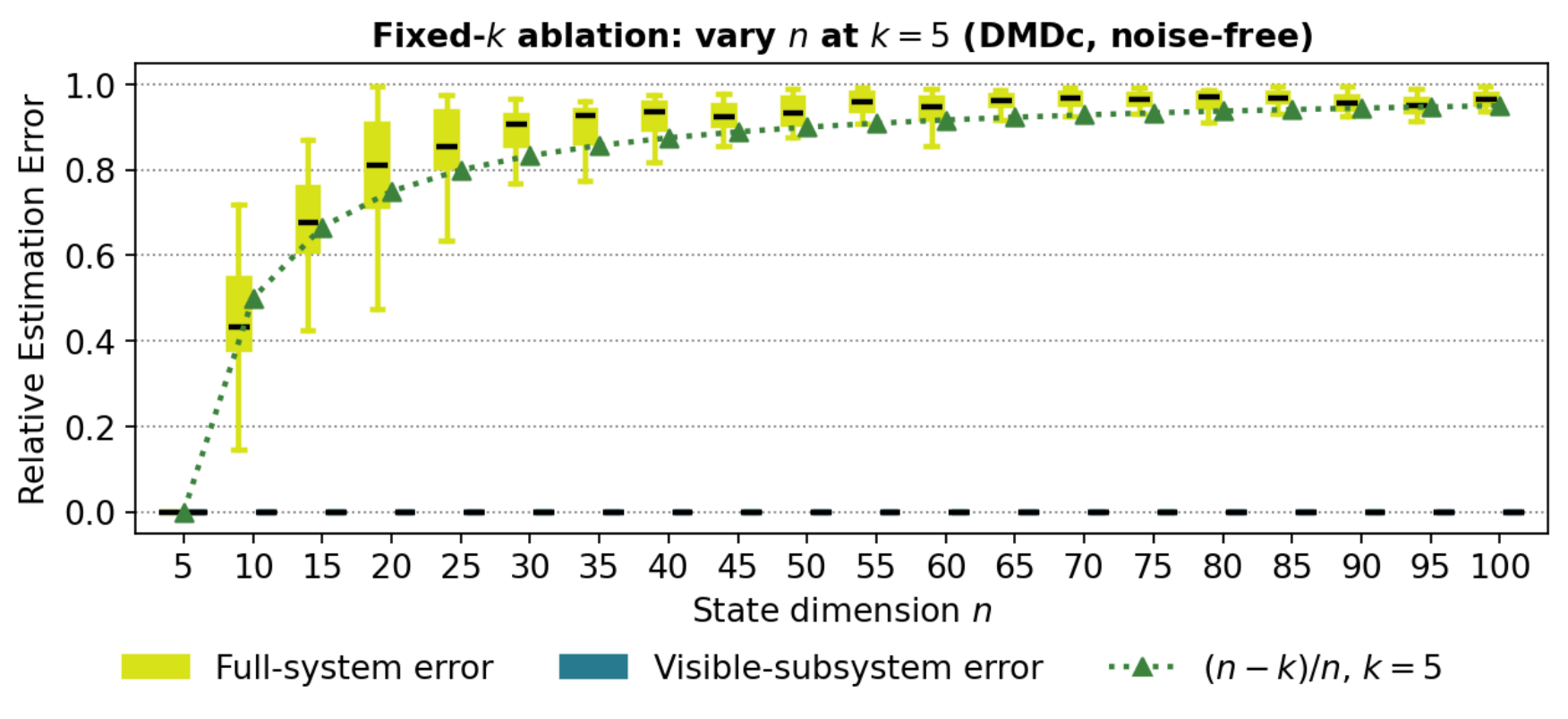}
    \caption{\textbf
    {Relative estimation error as a function of the state dimension $n$ at fixed visible subspace dimension $k=5$.}}
    \label{fig:stress-tests--ndim}
\end{figure}

\paragraph{Results}. Across both stress tests, $\text{REE}_\text{vis}$ remains at machine precision while $\text{REE}_\text{full}$ stays bounded away from zero. In the $\Delta t$ sweep (\Cref{fig:stress-tests-deltat}), the error gap is invariant to the sampling rate, consistent with the visible subspace preservation under non-pathological ZOH sampling established in \Cref{ssection:proofs}. In the fixed$-k$ ablation (\Cref{fig:stress-tests--ndim}), $\text{REE}_\text{full}$ tracks the reference $(n-k)/n$ closely, confirming that the full-system error is dominated by the structurally unidentifiable component. 

\subsection{Data-Driven Visibility Estimation}
\label{ssection:non-oracle-V(x0)}
The experiments in \S\ref{ssection:subsystem-recovery} compute $\vissub$ via the Krylov sequence $\calK_n(A, \Kcore)$, which presumes access to the true system matrices $\abpar$. We now examine whether $\vissub$ can be recovered from the observed trajectory alone.

\begin{figure}[H]
    \centering
    \includegraphics[width=0.95\linewidth]{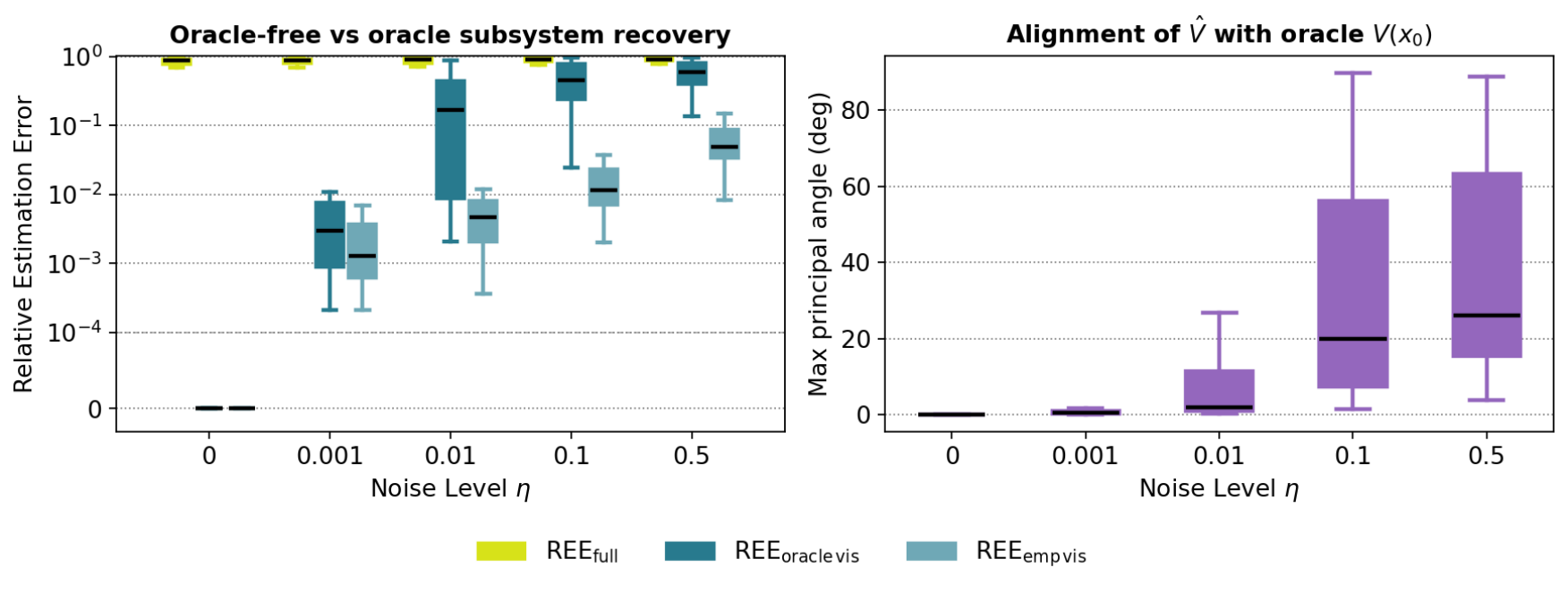}
    \caption{\textbf{Data-driven recovery of the visible subspace and subsystem.} Systems are drawn from the same construction as in other results, with $n=20, k=5, m=2$; the dynamics are simulated for $T=80$ steps under additive Gaussian observation noise, with $\eta \in \{0, 10^{-3}, 10^{-2}, 10^{-1}, 0.5\}$. \textbf{Left.} Three relative estimation errors: $\text{REE}_\text{full}$, $\text{REE}_\text{oracle-vis}$ (projection onto oracle-$V(x_0)$, $\text{REE}_\text{emp-vis}$ (projection onto $\hat{V}$ estimated from data). \textbf{Right.} Maximum principal angle with $\hat{V}$ and $V(x_0)$.}
    \label{fig:oracle-free}
\end{figure}

\paragraph{Empirical visible basis.}
Let $X=[\rx[0], \dots, \rx[T-1]] \in \brn{n \times T}$, with truncated SVD $X=U\Sigma V^\top$ and singular values $\sigma_1 \geq \sigma_2\geq \dots \geq \sigma_n \geq 0$. We define the empirical visible dimension $\hat{k}$ via
\begin{align*}
    \hat{k}=\begin{cases}
        \underset{j \in [n-1]}{\operatorname{argmax}}\frac{\sigma_j}{\sigma_{j+1}} & \text{s.t. } \;\sigma_j > \tau \sigma_\text{max} ,\\
        \operatorname{max} \{j: \sigma_j>\tau \sigma_\text{max}\} & \text{otherwise},
    \end{cases}
\end{align*}
where $\tau=10^{-10}$. The empirical visible basis $\hat{P}\in \brn{n \times \hat{k}}$ consists of the corresponding leading columns of $U$. We then recompute the visible subsystem error using $\hat{P}$ instead of $P_V$:
\begin{align*}
    \texttt{REE}_\text{emp-vis}&=\frac{\|[\hat{P}^\top\hat{A}\hat{P}, \;\; \hat{P}^\top\hat{B}]-[\hat{P}^\top A\hat{P}, \;\; \hat{P}^\top B]\|_F}{\frob{[\hat{P}^\top A \hat{P}, \;\; \hat{P}^\top B]}}.
\end{align*}
As an alignment diagnostic, we also report the maximum principal angle $\theta_\text{max}(\hat{P}, P_V)$ in degrees, computed as $\operatorname{arccos}(\sigma_\text{min}(P_V^\top \hat{P}))$.

\paragraph{Results.} (Left) Without noise ($\eta=0)$, the empirical basis matches the oracle and $\text{REE}_\text{oracle-vis}$ and $\text{REE}_\text{emp-vis}$ both vanish. As the noise level increases, both $\text{REE}_\text{oracle-vis}$ and $\text{REE}_\text{emp-vis}$ degrade; $\text{REE}_\text{emp-vis}$ is affected more severely. (Right) At $\eta=0$, the empirical- and oracle- visible subspaces align perfectly. Increasing noise levels gradually degrade the alignment.

\section{Reproducibility}
\label{ssection:experimental-setup}

\paragraph{Identifiability in Sparse Ensembles (§4.1).} We fix state dimensions $n \in \{2, \dots, 10\}$ and input dimension $m=2$. For each density value $p \in \{0, 0.1, \dots, 1.0\}$, we draw i.i.d. random systems $(A, B) \in \brn{n \times n} \times \brn{n \times m}$ from a sparse random ensemble as described in \cref{ssection:sparse-systems}. We track the realized densities:
\[
\delta_\tau(X) :=\frac{1}{nm}\underset{i \in [n], j \in [m]}{\sum} \mathbf{1\mathbf{\{}} |X_{ij}| > \tau \mathbf{\}},
\]
where $X \in \brn{n \times m}$ and $\tau =10^{-12}$. We label a system as controllable iff $\operatorname{rank}(\mathcal{C}_n(A, B))=n$, where $\mathcal{C}_n(A, B)=[B, AB, \dots, A^{n-1}B]$. For each grid cell $(n, p)$, we draw $S=1000$ independent systems $(A_s, B_s)$ and estimate the uncontrollable fraction $\frac{1}{S}\sum_{s=1}^S \mathbf{1 \{} \operatorname{rank}(\mathcal{C}_n(A, B))<n \mathbf{\}}$.

For each target initial state density $p_{\rx_0} \in \{0.25, 0.5, 0.75, 1.0\}$, we draw $N_{\rx_0}=100$ independent initial conditions as follows: (i)~ sample $v \sim \mathcal{N}(0, I_n)$ and normalize $\rx_0 \gets v/\|v\|_2$; (ii)~ if $p_{\rx_0}<1$, sample i.i.d. Bernoulli mask entries $m_i \sim \operatorname{Ber}(p_{\rx_0})$, set $\rx_0 \gets \rx_0 \odot m$, and renormalize to unit norm (provided $\|\rx_0\|_2 >0$). Let $d(A, B\spbar \rx_0)$ denote our PBH score as defined in \Cref{eq:pbh-test}. For each $\abpar$ and each $\rx_0$, we compute $d(A, B \spbar \rx_0)$, and define a binary identifiability indicator
\[
\operatorname{Id}_\varepsilon(A, B\spbar \rx_0):= \mathbf{1\{} d(A, B \spbar \rx_0)>\varepsilon \mathbf{\}}, \quad \varepsilon=10^{-6}.
\]
We report the aggregated identifiability scores for each $p_{\rx_0}$. All results are produced under a fixed RNG seed $\operatorname{rng}=12345$.

\paragraph{Simulation model (§4.2).} We simulate \Cref{eq:LTI} using Euler integration with fixed step size $dt=1$, over a horizon $T=80$. We generate control inputs as i.i.d. Gaussian sequences for each input channel $i \in [m]$, then normalize by each channel's empirical standard deviation. This yields a persistently exciting input of length $T$. In the NODE estimator, the discrete control sequence is converted to continuous time via zero-order-hold (ZOH).

\paragraph{Baseline Estimators (§4.2).} Beyond the high-level experimental design in §4, we fix the following implementation-level choices for the baseline estimators. DMDc was implemented as the standard least-squares estimator of $X_{1}=[\rx[1], \dots, \rx[T]]$ onto $[X_0;\; U_0]$ (with $X_0=[\rx[0], \dots, \rx[T-1]]$ and $U_0=[\ru[0],\dots, \ru[T-1]]$) to estimate $\abpar$. In our full-state, noise-free setting, MOESP reduces to the same least-squares solution apart from minor differences in numerical behavior. For SINDy, we implemented a linear variant with sequentially thresholded least squares (STLSQ) without relying on PySINDy: coefficients were iteratively fitted and hard-thresholded with sparsity parameter $\lambda=0.05$ for $8$ STLSQ iterations. 

\textbf{NODEs (§4.2).} Neural ODEs \citep{chen2018neural} use a parameterized function $f_{\theta}$ to approximate the underlying ODE of the system in \Cref{eq:LTI}. While $f_{\theta}$ is typically implemented as a neural network, in our sparse linear setting we parameterize it directly by a matrix pair $(A_{\mathrm{eff}},B_{\mathrm{eff}})$ and write $\dot{\rx}(t)=f_{\theta}(\rx(t),\ru(t))=A_{\mathrm{eff}}\rx(t)+B_{\mathrm{eff}}\ru(t)$.  We obtain a continuous control $\ru(t)$ from the control sequence via a zero-order hold. Given the observed initial state $\rx(0)=\rx_0$ and observation times $\{t_k\}_{k=0}^{T-1}$, we predict the trajectory $\hat{X}=\{\hat{\rx}(t_k)\}$ by numerically integrating the ODE using Euler integration. The parameters are learned by minimizing the mean squared reconstruction error $\mathcal{L}_{\mathrm{mse}}=\frac{1}{T}\sum_{k=0}^{T-1}\bigl\|\hat{\rx}(t_k)-\rx(t_k)\bigr\|_2^2$  To encourage sparsity, we parameterize the effective matrices using sigmoid gates with annealed temperature $\tau$, $A_{\mathrm{eff}} = A \odot \sigma(M_A/\tau)$, $B_{\mathrm{eff}} = B \odot \sigma(M_B/\tau)$ and add an $\ell_1$-style penalty on the average gate activations, $ \mathcal{L}=\mathcal{L}_{\mathrm{mse}}+\lambda_A\,\mathbb{E}\!\left[\sigma(M_A/\tau)\right] +\lambda_B\,\mathbb{E}\!\left[\sigma(M_B/\tau)\right]$, where $\odot$ denotes elementwise multiplication. For each system, we estimate a neural ODE with $(\lambda_A, \lambda_B) \in \{(0.1, 0.1), (0.01, 0.01), (0,0)\}$ and choose the value that leads to the smallest reconstruction error of the observed trajectory. All models are trained using RMSprop with a $0.001$ learning rate and $15000$ gradient update steps.

\paragraph{Subsystem Recovery (§4.2).} We generate LTI systems with input dimension $m=2$, sampling sparse-continuous matrices with density $p=0.1$ and $n=10$. We stabilize each $A$ by rescaling to spectral radius $\rho(A) \leq 0.95$ and curate uncontrollable systems. For each curated $\abpar$, we sample initial conditions as follows: (i)~ Sample $v \sim \mathcal{N}(0, I_n)$, (ii)~ If $\|v\|_2 < 10^{-14}$, resample $v$, (iii)~ Set $\rx_0 \gets v/\|v\|_2$. We compute the visible subspace $V(\rx_0)$ via the Krylov matrix $\mathcal{K}_n(A, \Kcore)$ and SVD thresholding with relative tolerance $10^{-10}$, and stratify by the visible dimension $k \in \{5, \dots, 10\}$. For the results presented in \cref{fig:fullsys-vs-vissys} (top), we add i.i.d. Gaussian observation noise to the simulated trajectories with standard deviation $\sigma \in \{0, 10^{-3}, 10^{-2}, 10^{-1}, 0.5\}$. We report $\texttt{REE}_\text{full}$ and $\texttt{REE}_\text{vis}$ using the true visible basis $P_V$ computed from $(A, B, \rx_0)$ with $A_V=P_V^\top A P_V$ and $B_V=P_V^\top B$, and aggregate results over the stratified triple set ($270$ in total, $45$ per $k \in \{5, \dots, 10\}$).

The code for all experiments is available at \href{https://github.com/aiulus/limid}{\texttt{github.com/aiulus/limid}}.

\section{Example}
\label{ssection:example}

\textbf{(experiment-consistency).}
    We illustrate the theory with the following example.
    \begin{align*}
        A=\begin{bmatrix}
            1 & 1 & 0 \\
            0 & 2 & 1 \\
            0 & 0 &3
        \end{bmatrix}, \; B=\begin{bmatrix}
            1 & 0\\
            2 & 1\\
            0 & 0
    \end{bmatrix}
    \end{align*}
    and the initial states
    \begin{align*}
        \rx_0^{(a)}=\begin{bmatrix}0\\0\\1\end{bmatrix}\quad(\text{``good''}),\qquad
        \rx_0^{(b)}=\begin{bmatrix}1\\-1\\0\end{bmatrix}\quad(\text{``bad''}).
    \end{align*}
    A convenient left–eigenbasis of $A$ is
    \begin{align*}
        w_1=\begin{bmatrix}2\\-2\\1\end{bmatrix},\quad
    w_2=\begin{bmatrix}0\\-1\\1\end{bmatrix},\quad
    w_3=\begin{bmatrix}0\\0\\1\end{bmatrix},
    \end{align*}
    for which
    \begin{align*}
        w_1^\top B=[-2,\,-2],\ \ w_2^\top B=[-2,\,-1],\ \ w_3^\top B=[0,\,0],\qquad
    w_3^\top\rx_0^{(a)}=1\neq0,\ \ w_3^\top\rx_0^{(b)}=0.
    \end{align*}
    Thus the third left–eigenvector is completely unexcited by $B$, exactly mirroring the $m{=}1$ base case.
    \paragraph{Visible/hidden decomposition and the secondary system.}
    Let $V\coloneqq\calK(A,\,[\rx_0^{(b)}\ \ B])$ and choose a complementary $W$ so that $\brn{3}=V\oplus W$.
    Because $\rx_0^{(b)}$ has zero third component and the third row of $B$ vanishes, one checks $V=\{(x_1,x_2,0):x_1,x_2\in\brn{}\}$ and $W=\mathrm{span}\{e_3\}$. Define
    \begin{align*}
        \tilde A\ \coloneqq\
        \begin{bmatrix}
        1 & 1 & 0\\
        0 & 2 & 0\\
        0 & 0 & 4
        \end{bmatrix},\qquad
        \tilde B\ \coloneqq\ B.
    \end{align*}
    Then $\tilde A|_{V}=A|_{V}$ and $V$ is $\tilde A$–invariant. Consequently, for any input and any trajectory staying in $V$ (in particular, those from $\rx_0^{(b)}$), the responses of $(A,B)$ and $(\tilde A,\tilde B)$ coincide; while for initial conditions with a $W$–component (e.g. $\rx_0^{(a)}$) the responses differ.
    \paragraph{Identifiability scores.}
    Let $\calK_3\!\coloneqq\!K\!\big(A,[\rx_0\ B]\big)$ and let $\mu_i(A,B\spbar\rx_0)\coloneqq \|w_i^\top[\rx_0\; B]\|_2\big/\bigl(\|w_i\|_2\;\|[\rx_0\; B]\|_2\bigr)$ with $\mu_{\min}=\min_i\mu_i$.
    For our two initial states:
    \begin{align*}
        \begin{array}{c|cccc}
        \text{initial state} & \operatorname{rank}(\calK_3)  & \mu_{\min} & \text{PBH margin} & \lambda_{\min}\big(W_T([\rx_0\ B])\big)\\
        \hline
        \text{good }\rx_0^{(a)} & 3  & \approx0.41 & \approx0.77 &  \approx8307.33 \\
        \text{bad }\rx_0^{(b)} & 2  & 0 & =0 & =0
        \end{array}
    \end{align*}
    Here "PBH margin" denotes $d_{\mathrm{PBH}}(A,B\mid\rx_0)$ as defined in \cref{eq:pbh-test}, and $W_T$ is the finite-horizon controllability Gramian of the augmented pair $(A, \Kcore)$.
\paragraph{How $(\tilde A,\tilde B)$ is picked.}
The construction enforces $\tilde B=B$ and $\tilde A|_V=A|_V$ on the \emph{visible} subspace
\begin{align*}
    V=\calK\!\big(A,\,[\rx_0^{(b)}\ \ B]\big)=\{(x_1,x_2,0)\},
\end{align*}
while freely modifying the \emph{hidden} block on $W=\mathrm{span}\{e_3\}$ (here: moving the eigenvalue $3\to4$ and zeroing the $V\!\to\!W$/$W\!\to\!V$ couplings). This preserves all single-trajectory data from $\rx_0^{(b)}$ yet changes the responses from $\rx_0^{(a)}$, illustrating the equivalence-class mechanism.

\section{Notation}
\label{app:notation}
We summarize the key mathematical notation used throughout the paper below.

\begin{longtable}{ll}
\toprule
\textbf{Symbol} & \textbf{Description} \\
\midrule
$\rx(t) \in \mathbb{R}^n$ & State vector \\
$\ru(t) \in \mathbb{R}^m$ & Control input \\
$(A,B) \in \Omega$ & Unknown system matrices \\
$\Omega \subseteq \mathbb{R}^{n \times n} \times \mathbb{R}^{n \times m}$ & Parameter space \\
$\mathcal{E} = (\rx_0, \mathcal{U})$ & Experiment class \\
$\re = (\rx_0, \ru)$ & Single experiment (realization) \\
$\varphi(t \,|\, A, B, \rx_0, \ru)$ & State trajectory \\
$[A, B]_{\re}$ & Experiment-consistent set \\
$[A, B]_\mathcal{E}$ & Design-consistent set \\
$V(\rx_0) = \mathcal{K}_n(A, [\rx_0 \, B])$ & Visible subspace \\
$k = \dim(V(\rx_0))$ & Visible dimension \\
$P_V \in \mathbb{R}^{n\times k}$ & Orthonormal basis for $V(\rx_0)$ \\
$\xi(t) = P^\top \rx(t)$ & Minimal (reduced) coordinates \\
$A_V, B_V$ & Visible subsystem matrices \\
$A_W, A_*$ & Hidden and cross-coupling blocks \\
$\mathcal{G}_{[0,T]}$ & Finite-time Gramian of the joint regressor \\
$\mu_{\min}$ & Left-eigenvector alignment margin \\
$d_\mathrm{PBH}(A, B \,|\, \rx_0)$ & Fixed-experiment PBH margin \\
$\mathrm{REE}_\mathrm{full}$ & Full-system relative estimation error \\
$\mathrm{REE}_\mathrm{vis}$ & Visible-subsystem relative estimation error \\
\bottomrule
\end{longtable}

\end{document}